\newtheorem{definition}{Definition}
\newtheorem{theorem}{Theorem}
\newtheorem{example}{Example}
\newtheorem{proposition}{Proposition}
\newtheorem{lemma}{Lemma}
\newcommand{\ensemble}[1]{\mathcal{E}_{\mathcal{#1}}}
\title{
Efficient Generation of Hidden Outliers for Improved Outlier Detection
}
\author {
    Jose Cribeiro-Ramallo\textsuperscript{\rm 1},
    Vadim Arzamasov\textsuperscript{\rm 1},
    Klemens Böhm\textsuperscript{\rm 1}
}
\begin{document}

\maketitle

 \begin{abstract}
 Outlier generation is a popular technique used for solving important outlier detection tasks. 
 Generating outliers with realistic behavior is challenging. 
 Popular existing methods tend to disregard the ``multiple views'' property of outliers in high-dimensional spaces.
 The only existing method accounting for this property falls short in efficiency and effectiveness. 
 We propose \textsc{Bisect}, a new outlier generation method that creates realistic outliers mimicking said property. 
 To do so, \textsc{Bisect} employs a novel proposition introduced in this article stating how to efficiently generate said realistic outliers. 
 Our method has better guarantees and complexity than the current methodology for recreating ``multiple-views''. 
 We use the synthetic outliers generated by \textsc{Bisect} to effectively enhance outlier detection in diverse datasets, for multiple use cases. For instance, oversampling with \textsc{Bisect} reduced the error by up to 3 times when compared with the baselines.
 \end{abstract}

\section{Introduction}

\paragraph{Motivation.}
Outliers are observations in a dataset that stand out from the rest.
They represent rare or surprising events, making outlier detection important in many applications~\cite{fraud, outbook}. 
In a nutshell, outlier detection can be approached in two ways. 
The conventional approach is to use one outlier detection model fitted to all dimensions, the \emph{full-space approach}. 
But according to the ``multiple views'' property~\cite{multview}, points often are outliers only in some sets of dimensions (subspaces).
Together with the curse of dimensionality, this often renders the approach ineffective in high-dimensional data~\cite{aggarwal2001surprising, HiCS}. 
A very common alternative approach 
uses an ensemble of outlier detection models trained on several smaller subspaces~\cite{randomsubsp,Ensembook}, dubbed \emph{subspace approach}. 
We will build on this distinction in what follows.

Synthesizing outliers may improve outlier detection~\cite{optimalclass}. 
For instance, one may present synthetic outliers to a domain expert for labeling and later refine the outlier detection model using their feedback~\cite{hidden}.
Next, utilizing synthetic outliers allows to reframe 
outlier detection as a classification problem~\cite{GAAL}.
However, if objects generated are too far from the inliers, not representative of the outlier class, or even if there are too many artificial samples \cite{GAAL, Outsurvey}, the classification boundary degrades \cite{hempstalk2008one}.
This calls for a ``careful'' generation procedure, as opposed to a random one.

Most outlier generation approaches fall into one of two groups. 
\emph{Original space generators}~\cite{optimalclass, active} generate outliers using the original domain of the data.
\emph{Embedded space generators}~\cite{GAAL, AnoGAN} create artificial outliers using a representation of the embedded latent space of the data. 
When the dimensionality is high, original space generators in particular tend to create outliers arbitrarily distant from the inliers~\cite{Outsurvey, onthesurprising}. 
Embedded-space generators in turn produce samples in a lower-dimensional space to solve this problem. 
But coming up with an embedding model that accurately represents the domain is challenging because of the need for specific assumptions that are difficult to verify~\cite{dimred2}, 
or because a lot of data is required~\cite{dimred1}. 
So these models tend to be unsuitable if not enough is known about the data.
In either case, synthetic outliers may not demonstrate the common ``multiple view'' characteristic of high-dimensional outliers~\cite{multview}, thus failing to replicate the outlier class. 

\emph{Hidden outliers}~\cite{hidden} are outliers that represent the disagreement between full-space and subspace outlier detection approaches: 
By definition, hidden outliers are detectable by either a full-space approach 
or a subspace approach, but not both. 
Hence, they feature the ``multiple views'' property and at the same time tend to be close to the inliers. 
This is because distant points tend to be detectable by both approaches and thus are not hidden. 

\begin{figure}[t]
\centering
    \begin{subfigure}[b]{.49\linewidth}
    \centering
        \includegraphics[width = .7\textwidth]{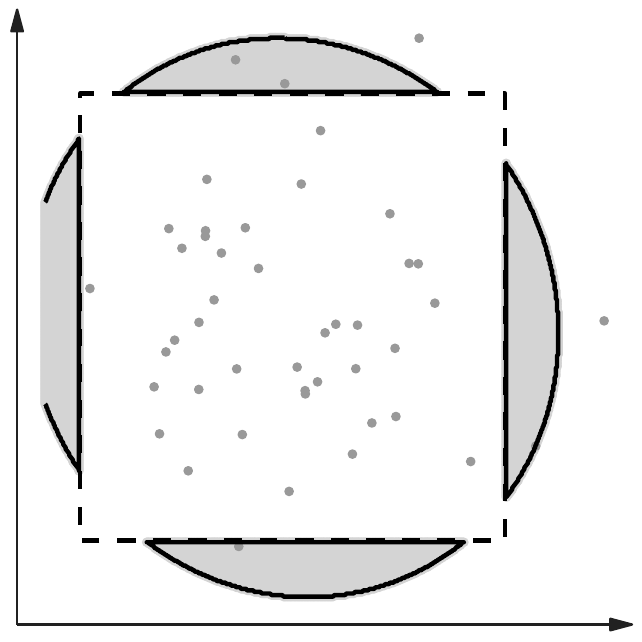}
        
    \end{subfigure}
    \hfill
    \begin{subfigure}[b]{.49\linewidth}
    \centering
        \includegraphics[width = .7\textwidth]{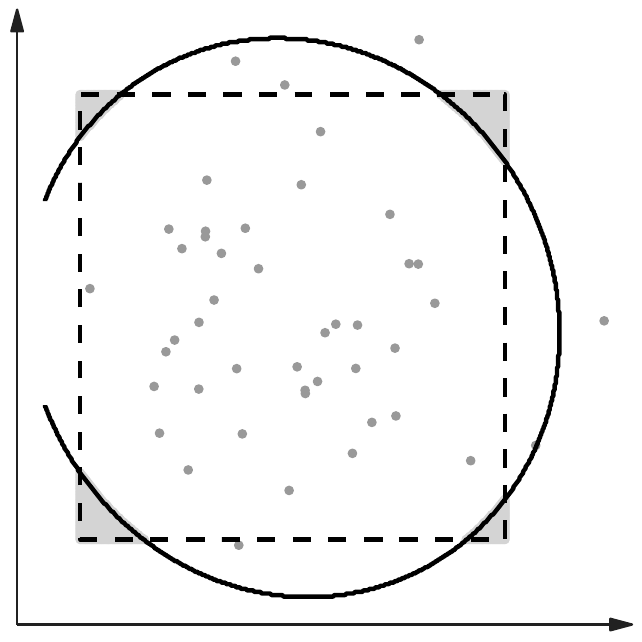}
        
    \end{subfigure}
    \caption{Example of regions of hidden outliers. }
    \label{fig:HORegion}
\end{figure}

\begin{example}
In Figure~\ref{fig:HORegion}, the shaded areas represent regions where hidden outliers may occur. 
The solid ellipse line delimits the boundary of the two-dimensional full-space outlier detector, while the dashed lines constitute the boundary of the subspace detector, composed of detectors fitted in each one-dimensional subspace. 
Points in the shaded areas on the left can only be detected by the subspace detector, and the ones on the right by the full-space detector.
\end{example}

\paragraph{Challenges.}
Despite their desirable properties, generating hidden outliers is not trivial. 
First, there are no known guarantees that hidden outliers can be generated. 
Next, even if they can, it is generally impossible to describe the regions of the space from which they originate analytically and sample from there. 
It is necessary to generate candidate points and verify which ones qualify as hidden outliers.
However, generating too many poor candidates renders computations intractable.
Namely, the definition of hidden outliers builds on subspace outliers, while subspace outlier detection uses up to $2^N-1$ models, one for each subspace, where $N$ is the number of dimensions. 
Another issue is that an ill-designed candidate generation process could leave parts of the space unexplored. 
Finally, quantitative evaluation criteria for hidden outliers currently do not exist. 
All this calls for an efficient generation method and a procedure to assess its utility.
The only existing generator for hidden outliers, \textsc{Hidden}~\cite{hidden}, heavily relies on its hyperparameter which one cannot readily tune, as we will explain. It is unclear whether a value of this hyperparameter exists that ensures both efficient synthesis and high quality of hidden outliers. 
 
\paragraph{Contributions}
Our contributions are as follows.
(1)~We prove the existence of hidden outliers. In particular, 
we formulate and prove the so-called 
hidden outlier existence proposition which asserts that it is always possible to generate a hidden outlier between an inlier and an outlier of the full-space approach. 
(2)~Leveraging this proposition, we propose a new method to efficiently generate hidden outliers, which we call \textsc{Bisect}. 
Importantly, \textsc{Bisect} 
gives certain guarantees in generating hidden outliers
and does not require any external hyperparameters.
(3) We propose a methodology to improve outlier detection using hidden outliers. 
We apply it to various datasets to showcase the value of hidden outliers synthesized by \textsc{Bisect}. 
There are certain obstacles that are in the way of very general claims regarding any potential superiority of BISECT for outlier detection, as we will explain in Sections 4 and 5. 
Having said this, we observed significantly reduced error rates in one-class classification and supervised outlier detection in our experiments compared to established methods. 
(4)~We share the code of \textsc{Bisect} and of our experiments.\footnote{\url{https://github.com/jcribeiro98/Bisect}}

\section{Related Work}

\paragraph{Outlier Generation.}
There exists an extensive body of work on synthetic outlier generation in tabular data~\cite{Outsurvey}. Some of these methods do~\cite{Oversamplingwithoutliers}, while others do not~\cite{GAAL}, require genuine outliers. One approach is to sample points from a distribution distinct from that of inliers. Other representative approaches include adding noise to the inliers or permuting their attribute values. For a comprehensive review and taxonomy of outlier generation methods, we refer to~\cite{Outsurvey}. However, the vast majority of these methods do not verify the ``multiple-view'' property, and may produce outliers not accurately representing the outlier class.

\paragraph{Generation of Hidden Outiers.}
To our knowledge, there is only one method, \textsc{Hidden}~\cite{hidden}, that specifically replicates the ``multiple views'' property by generating hidden outliers.
At each iteration, \textsc{Hidden} randomly samples a candidate point from a hypercube centered at a random genuine point and checks whether this candidate is a hidden outlier. 
The lateral size of each hypercube is $\varepsilon \cdot \textit{range}$, where \textit{range} represents the maximum range of values among the genuine data points in any dimension. 
The hyperparameter $\varepsilon \in \left[0,1\right]$ controls the size of each hypercube. However, choosing an appropriate $\varepsilon$ value can be challenging. 
If values of $\varepsilon$ are too small, the generation of hidden outliers may fail. 
Values close to 1 in turn often lead to inefficiencies due to the generation of many poor candidates. 
There is no recommended universal value for $\varepsilon$: An $\varepsilon$ value that is efficient for certain datasets might not work well for different ones.
Additionally, different $\varepsilon$ values result in distinct and uncontrollable probability distributions of generated hidden outliers, potentially leaving out important regions in the data space. 
See~Appendix for demonstrations. Our method, \textsc{Bisect}, does not rely on such a hyperparameter and exhibits more predictable behavior. 
We will compare \textsc{Bisect} with \textsc{Hidden}.

\textbf{Evaluation of Artificial Outliers.}
We are not aware of any established methodology to evaluate hidden outliers. 
The paper~\cite{hidden} primarily focuses on using hidden outliers for exploratory purposes. 
To demonstrate the utility of \textsc{Bisect} and its advantages over \textsc{Hidden}, we leverage evaluation techniques from outlier generation in general.
Synthetic outliers have several applications within outlier detection~\cite{Outsurvey}. 
Common use cases in the literature are self-supervised and supervised outlier detection~\cite{GAAL,Dlamini2021}.

Self-supervised outlier detection uses synthetic outliers as the positive class to construct a two-class classifier. Subsequently, this classifier is applicable in various tasks~\cite{Chandolasurvey}, particularly in one-class classification. 
The effectiveness of the generated outliers determines the performance of the resulting classifier.

The supervised case for outlier detection is akin to an unbalanced binary classification problem with an extremely scarce minority class \cite{outbook}. 
To address imbalanced data, a common technique is oversampling the minority class \cite{SMOTE, Oversamplingwithoutliers,cWGAN}. 
This has also been applied in the supervised outlier detection case~\cite{Dlamini2021,extreme_imbalance}.
The quality of the generated data depends on the performance gain achieved compared to the unbalanced classifier. 

Oversampling and self-supervised learning are among the most frequently used applications of synthetic outliers in the literature~\cite{occsurvey, Chandolasurvey}. Furthermore, theoretical studies have demonstrated that self-supervised classifiers are particularly well-suited for one-class classification \cite{optimalclass}. Hence, we will evaluate synthetic hidden outliers with these tasks.

\section{Our Method: \textsc{Bisect}}
This section presents \textsc{Bisect}, a novel method for generating hidden outliers. 
We start with the necessary definitions and derive new theoretical results regarding the existence of regions with hidden outliers. 
Subsequently, we explore how these findings can be used to generate hidden outliers effectively. 
Finally, we comprehensively describe \textsc{Bisect} and analyze its properties.
All propositions and theorems have their corresponding proofs in the appendix. 

\subsection{Hidden Outliers and the Hidden Region}

Let $X = \mathbb{R}^d$ be a metric space, and $D=\{x^i\}\subset X$ be a finite dataset.
Let $\mathcal{M(\cdot)}:X\longrightarrow~\{0,1\}$ denote an outlier detector model, and let $\mathcal{M}=\mathcal{M}(D)$ be its fitted version with $D$. 
Next, let the closed set $R(\mathcal{M})=\left\{x\in X| \mathcal{M}(x) = 0 \right\}$ be the acceptance region (i.e., the one containing inliers) of $\mathcal{M}$ with the boundary $\partial R(\mathcal{M})$. Further, let $\mathcal{M}_S:S\subset X\longrightarrow\{0,1\}$ be a detector fitted with the projected dataset over the subspace $S\subset X$, 
$D|_{S} = \left\{x^i|_S\right\}$. 
For $\mathcal{M}(\cdot)$ and the set of subspaces $\{S_i\}$, 
we define the subspace ensemble of $\mathcal{M}(\cdot)$ as the mapping  $ \mathcal{E}_\mathcal{M}: X \longrightarrow \{0,1\}$, such as $\ensemble{M}(x) = a\left(\mathcal{M}_{S_1}(x|_{S_1}),\dots,\mathcal{M}_{S_m}(x|_{S_m})\right)$, where $a$ is an aggregation function.
We now formally define hidden outliers and hidden region:

\begin{definition} (Hidden region, hidden outlier, adversary)
    Given a fitted outlier detector $\mathcal{M}$, 
    a subspace ensemble $\mathcal{E}_\mathcal{M}$, 
    and aggregation function $a =\textrm{max}$, define the sets:
    \begin{align*}
        H_1(\mathcal{M}) &= R(\mathcal{M})\setminus R(\ensemble{M}),\\
        H_2(\mathcal{M}) &= R(\ensemble{M}) \setminus R(\mathcal{M}).
    \end{align*}
    The union $H_1(\mathcal{M})\cup H_2(\mathcal{M})$ is the hidden region of $\mathcal{M}$. A hidden outlier is a point $h \in X$ in the hidden region of $\mathcal{M}$.
    The outlier detector $\mathcal{M}(\cdot)$ used is the adversary of $h$.  
\end{definition}
For instance, in Figure~\ref{fig:HORegion}, the shaded area on the left plot is $H_1$, and $H_2$ is the shaded area on the right.

By this definition, hidden outliers are points that are outliers in certain subspaces of $X$ but inliers in the original space, or vice-versa. This implies that hidden outliers necessarily exhibit the ``multiple views'' property when $\{S_i\}$ are generated with the canonical basis of $X$. 

Without a guarantee that a hidden region exists, 
effectively searching for good candidates is hard. 
The following proposition addresses this concern and provides guidance on where one could generate hidden outliers. 
The conditions presented are sufficiently versatile to occur in real scenarios. 
\begin{proposition}\label{convexexistence}("Hidden outlier existence"):
    Let $x$ and $y$ be points in the previously defined metric space such that $x \in R(\mathcal{M})$ and $y \notin R(\mathcal{M})$. Assume that there exists a point $z$ in the convex combination of $x$ and $y$ such as $z \in \partial R(\mathcal{M}) \Rightarrow z \notin \partial R(\ensemble{M})$. 
    Then there exists 
    $z'$ in the convex combination of $x$ and $y$ such as $z' \in H_1(\mathcal{M}) \cup H_2(\mathcal{M})$.
\end{proposition}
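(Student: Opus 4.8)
The plan is to reduce everything to the one-dimensional segment between $x$ and $y$ and to track where the two detectors disagree. First I would parametrize the convex combination by $\gamma(t) = (1-t)x + ty$ for $t \in [0,1]$ and set $g(t) = \mathcal{M}(\gamma(t))$ and $h(t) = \ensemble{M}(\gamma(t))$, both valued in $\{0,1\}$. By hypothesis $g(0) = 0$ (since $x \in R(\mathcal{M})$) and $g(1) = 1$ (since $y \notin R(\mathcal{M})$). A point $\gamma(t)$ lies in $H_1(\mathcal{M}) \cup H_2(\mathcal{M})$ exactly when $g(t) \neq h(t)$, because $H_1$ is the locus $g=0, h=1$ and $H_2$ is the locus $g=1, h=0$. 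So the goal becomes to exhibit a parameter $t'$ at which the two indicator functions disagree.

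The crux is that the hypothesis hands us a boundary-crossing point $z = \gamma(t_z) \in \partial R(\mathcal{M})$ on the segment at which $z \notin \partial R(\ensemble{M})$. I would first argue that such a crossing can be taken to genuinely separate the two regimes of $g$: using that $R(\mathcal{M})$ is closed together with $\gamma(0) \in R(\mathcal{M})$ and $\gamma(1) \notin R(\mathcal{M})$, the first-exit parameter $t^* = \inf\{t : \gamma(t) \notin R(\mathcal{M})\}$ satisfies $\gamma(t^*) \in \partial R(\mathcal{M})$, with $\gamma(t) \in R(\mathcal{M})$ for $t < t^*$ and segment points arbitrarily close above $t^*$ lying outside $R(\mathcal{M})$. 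The role of the hypothesis is precisely to guarantee that a crossing of this separating type also avoids $\partial R(\ensemble{M})$.

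Next I would exploit that $R(\ensemble{M})$ is closed: with $a = \max$, it is the finite intersection $\bigcap_i \{x : x|_{S_i} \in R(\mathcal{M}_{S_i})\}$, i.e. a finite intersection of preimages of closed subspace acceptance regions under the continuous coordinate projections. Since $z \notin \partial R(\ensemble{M})$, the point $z$ lies either in the interior or in the exterior of $R(\ensemble{M})$, so there is a radius $\delta > 0$ on which $h$ is constant throughout $B(z,\delta)$. This local constancy of the ensemble against the change of the full-space detector at $z$ is what forces the disagreement, and I would close with a short two-case analysis. If $z$ is exterior to $R(\ensemble{M})$, then $z$ itself already works: $z \in \partial R(\mathcal{M}) \subseteq R(\mathcal{M})$ while $z \notin R(\ensemble{M})$, so $z \in H_1(\mathcal{M})$. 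If instead $z$ is interior to $R(\ensemble{M})$, I would pick a segment point $z'$ slightly past the crossing, close enough to lie in $B(z,\delta)$ yet outside $R(\mathcal{M})$; then $z' \in R(\ensemble{M}) \setminus R(\mathcal{M}) = H_2(\mathcal{M})$.

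I expect the main obstacle to be this second case, namely certifying that arbitrarily near the crossing there are points of the segment strictly outside $R(\mathcal{M})$ while still inside $B(z,\delta)$. This is exactly where the first-exit characterization of the crossing matters, as opposed to an arbitrary boundary touch that the segment might only graze tangentially from inside; so I would make sure the crossing point extracted from the hypothesis is of this separating type before invoking the local constancy of $h$. The remaining ingredients, namely closedness of the two acceptance regions and the definition of the boundary as a set of limit points of the complement, are routine and would be verified only briefly.
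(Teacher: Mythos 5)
Your proof is correct and follows the same basic strategy as the paper's: locate a crossing point $z \in \partial R(\mathcal{M})$ on the segment, use $z \notin \partial R(\ensemble{M})$ to obtain a ball on which the ensemble's verdict is constant, and exhibit a segment point inside that ball on which $\mathcal{M}$'s verdict has flipped, landing in $H_1(\mathcal{M})$ or $H_2(\mathcal{M})$. Two details of your write-up are in fact more careful than the published argument. First, the paper passes from ``$z \in \partial R(\mathcal{M})$'' directly to ``every segment-restricted ball around $z$ meets $X \setminus R(\mathcal{M})$,'' which is a non sequitur for an arbitrary boundary point: the nearby exterior points could all lie off the segment, e.g.\ when the segment only grazes $\partial R(\mathcal{M})$ tangentially from inside. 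Your first-exit parameter $t^* = \inf\{t : \gamma(t) \notin R(\mathcal{M})\}$ makes that property hold by construction, at the price --- which you correctly flag --- of having to read the hypothesis as applying to that particular crossing rather than to an arbitrary supplied $z$. Second, the paper asserts that $z \notin \partial R(\ensemble{M})$ forces $z \in R(\ensemble{M})^\circ$ and only treats that case; your two-case split correctly observes that the exterior case is the easy one, giving $z \in H_1(\mathcal{M})$ outright because $R(\mathcal{M})$ is closed. Both arguments inherit the same residual ambiguity from the statement itself (the hypothesis, written as an implication, does not pin down which boundary point it supplies), so your proof is no worse off than the paper's on that score and is tighter everywhere else.
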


\begin{figure}
\begin{subfigure}{.45\linewidth}
  \centering
  \includegraphics[width=.8\linewidth]{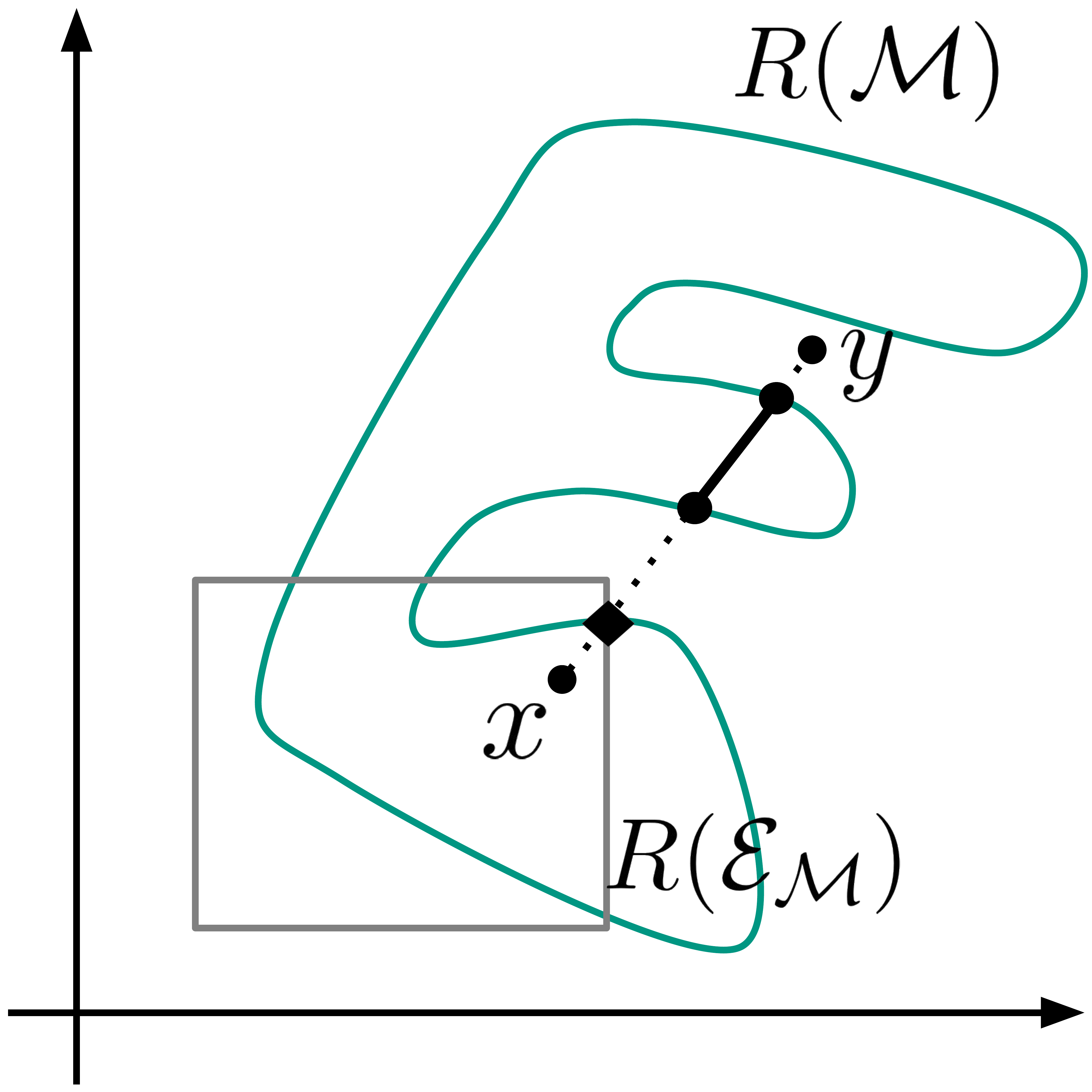}
  \caption{}
  \label{sfig:ho1} 
\end{subfigure}
\begin{subfigure}{.45\linewidth}
  \centering
  \includegraphics[width=.8\linewidth]{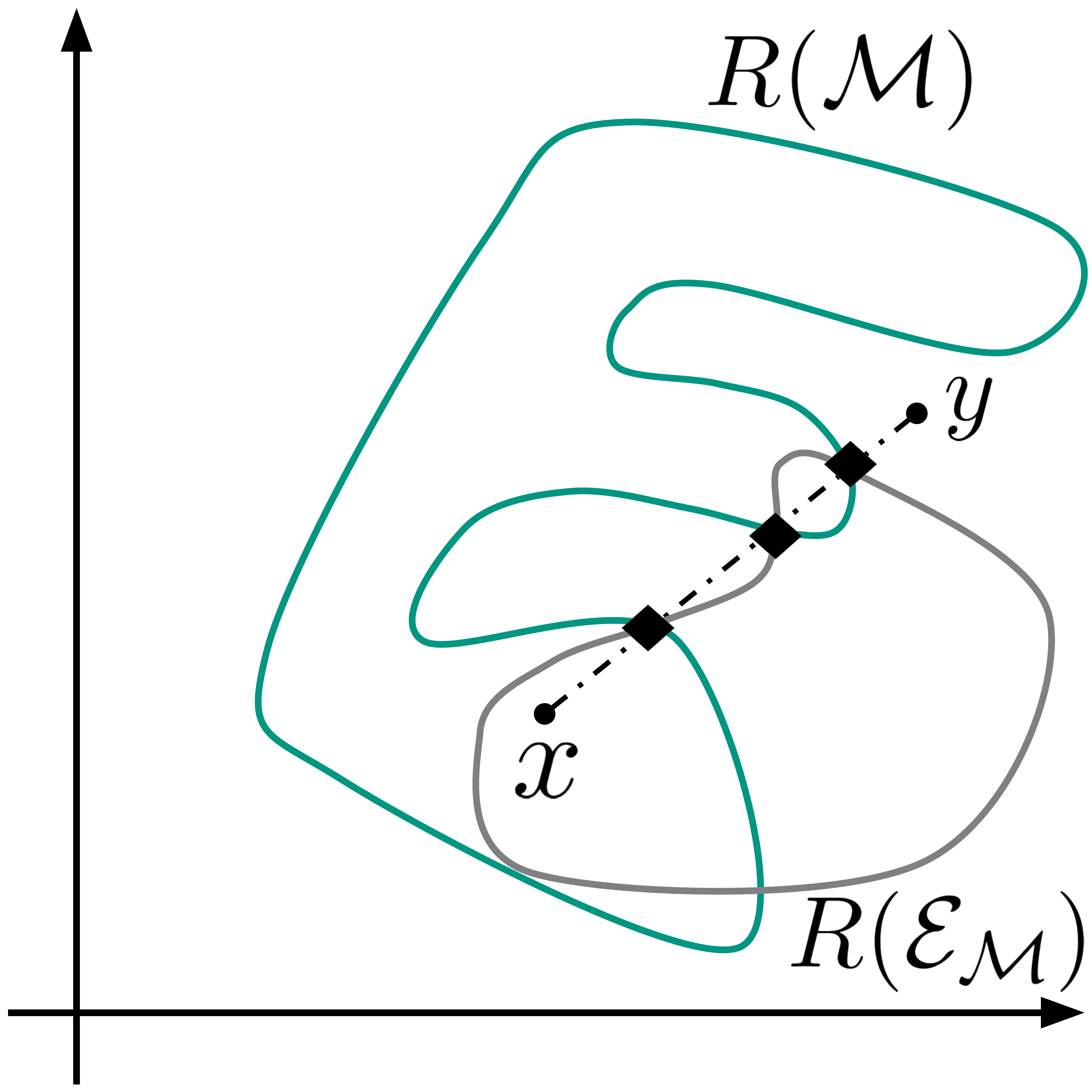}
  \caption{}
  \label{sfig:ho2} 
\end{subfigure}
\caption{Examples of different hidden regions. $R(\mathcal{M})$ is marked in green, $R(\ensemble{M})$ in grey, and the convex combination of $x$ and $y$ is represented with a dashed line.} 
\label{fig:ho}
\end{figure}
 The existence of $z$ condition in the proposition guarantees that the convex combination crosses the hidden region.
 \begin{example}
 In Figure \ref{fig:ho} the dashed line marks the convex combination of $x$ and $y$. 
 We marked with a straight line the intersection of the convex combination with the hidden region. 
 The condition in Proposition \ref{convexexistence} rules out cases like the one in Figure \ref{sfig:ho2}, where there is no intersection with the hidden region. 
 Cases like in Figure \ref{sfig:ho1} that cross the intersection of boundaries but contain hidden outliers are accepted.   
 \end{example} 

We can limit the search for candidate hidden outliers to the convex combination of two points. This shifts the problem from searching in $X$ to searching in the image of $\alpha(t) = ty + (1-t)x$, i.e., in the set of points 
$ty + (1-t)x$ for $t\in [0,1]$. 
The following section elaborates on this. 

\subsection{Generating Hidden Outliers by Finding Roots}

We reformulate $\alpha(t)$ so that the points of the image that are hidden outliers are now its zeroes. 
This will allow us to use the bisection method to generate hidden outliers. 
Let $F$ be an indication function, such as: \[
    F(x) = 
    \begin{cases}
        1, & \text{if } \mathcal{M}(x) = \ensemble{M}(x) = 1, \\
        0, & \text{if } \mathcal{M}(x) \neq \ensemble{M}(x), \\
        -1, & \text{if } \mathcal{M}(x) = \ensemble{M}(x) = 0.
    \end{cases}
\]
The roots of the function $f = (F \circ \alpha) (t)$ are $t \in [0,1]$ such that $\alpha(t)$ are hidden outliers. To ensure convergence of the bisection method to a root of $f$, $f$ must be monotonic. The following theorem states that $f$ is monotonic when the image of $\alpha$ crosses the hidden region once. 

\begin{theorem}\label{convergence}("Convergence into a hidden outlier")
Let $f$ be defined as before. 
Assume that at most exist, and are unique, $z \text{ and } z'$ in the convex combination of $x\in R(\mathcal{M})$ and $y \notin R(\mathcal{M})$ such as: $z \in \partial R(\mathcal{M})$, $z' \in \partial R(\ensemble{M})$, $z \neq z'$, and both verify the last condition of proposition \ref{convexexistence}. 
Then the bisection method will converge to a root of $f$.
\end{theorem}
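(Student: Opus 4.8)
The plan is to reduce the monotonicity claim to the behaviour of two scalar step functions and then invoke the standard convergence of bisection. First I would set $g(t) = \mathcal{M}(\alpha(t))$ and $h(t) = \ensemble{M}(\alpha(t))$, both valued in $\{0,1\}$, and observe that the three cases defining $F$ collapse into the single identity $f(t) = g(t) + h(t) - 1$, since checking the four combinations of $(g,h)$ reproduces $F \in \{-1,0,1\}$. The roots of $f$ are then exactly the $t$ with $g(t) \neq h(t)$, i.e. the parameters for which $\alpha(t) \in H_1(\mathcal{M}) \cup H_2(\mathcal{M})$. So it suffices to show that $g$ and $h$ are non-decreasing on $[0,1]$ with matching endpoint behaviour, whence $f$ inherits monotonicity and a sign change.

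Next I would pin down the shape of $g$ and $h$ from the uniqueness hypotheses. Since $R(\mathcal{M})$ is closed and, by assumption, the segment $\alpha([0,1])$ meets $\partial R(\mathcal{M})$ in the single point $z$, the supremum of $\{t : \alpha(t) \in R(\mathcal{M})\}$ is itself a boundary point, hence equals the parameter $t_z$ with $\alpha(t_z) = z$; together with $g(0)=\mathcal{M}(x)=0$ and $g(1)=\mathcal{M}(y)=1$ this forces $g$ to be a single upward step at $t_z$. The identical argument applied to $\ensemble{M}$ and its unique crossing $z'$ shows $h$ is a single step at $t_{z'}$. Because $\alpha$ is injective ($x \neq y$) and the last condition of Proposition~\ref{convexexistence} yields $z \notin \partial R(\ensemble{M})$ and $z' \notin \partial R(\mathcal{M})$, we get $z \neq z'$ and therefore $t_z \neq t_{z'}$, so the two jumps are genuinely separated.

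The step I expect to be the crux is fixing the \emph{direction} of the ensemble step, namely showing $h(0)=0$ and $h(1)=1$ rather than the reverse. This is precisely what ``crossing the hidden region once'' encodes: if $h$ decreased while $g$ increased, then $g$ and $h$ would disagree on the two outer pieces of the segment and agree in the middle, producing $f=0$ near both endpoints and $f=+1$ in between, a non-monotone profile in which the segment enters the hidden region twice (and $x,y$ are themselves hidden outliers). I would rule this out from the standing choice of endpoints, namely that $x$ is an inlier for both detectors and $y$ an outlier for both, so that $f(0)=-1$ and $f(1)=+1$; these sign conditions force both single crossings to go $0 \to 1$. With both steps upward, $f$ equals $-1$ before $\min(t_z,t_{z'})$, equals $0$ on the nondegenerate open interval $\left(\min(t_z,t_{z'}), \max(t_z,t_{z'})\right)$, and equals $+1$ afterwards, so $f$ is non-decreasing.

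Finally I would close with the textbook bisection argument, adapted to an integer-valued $f$. Because $f(0)=-1 < 0 < 1 = f(1)$ and $f$ is monotone, the invariant $f(a_k) \le 0 \le f(b_k)$ is preserved at each step, so the nested intervals $[a_k,b_k]$ always contain the zero-plateau $\{t : f(t)=0\}$. Since $|b_k-a_k| = 2^{-k}$ eventually drops below the positive length $|t_z - t_{z'}|$ of that plateau, the midpoints must land inside it, and the method reaches (and in the limit converges to) some $t^\star$ with $f(t^\star)=0$, i.e. $\alpha(t^\star)$ is a hidden outlier. The only delicacy beyond the direction argument is that $f$ is discontinuous, so I would phrase convergence through this shrinking-interval/positive-length-plateau observation rather than through the intermediate value theorem.
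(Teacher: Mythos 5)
Your reduction of $F$ to $f(t)=g(t)+h(t)-1$ with $g=\mathcal{M}\circ\alpha$ and $h=\ensemble{M}\circ\alpha$, together with the observation that the uniqueness hypotheses force each of $g$ and $h$ to be a step function with at most one jump, is a cleaner organization than the paper's proof (which proves an auxiliary boundary-crossing lemma and then walks through an explicit case split on the memberships of $x$, $z$, and $y$ in $R(\ensemble{M})$); your closing plateau argument also matches the paper's final paragraph. However, there is a genuine gap at exactly the step you call the crux: you dispose of the ``wrong direction'' for $h$ by invoking ``the standing choice of endpoints, namely that $x$ is an inlier for both detectors and $y$ an outlier for both, so that $f(0)=-1$ and $f(1)=+1$.'' No such assumption is available. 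The theorem, Proposition~\ref{convexexistence}, and the \textsc{Bisect} algorithm itself constrain $x$ and $y$ only relative to $R(\mathcal{M})$: the origin is a data point with $\mathcal{M}$-label $0$, but nothing prevents $\ensemble{M}$ from labeling it $1$ (so $h(0)=1$ and $f(0)=0$, i.e.\ $x$ is itself a hidden outlier), and likewise $y\notin R(\mathcal{M})$ does not imply $y\notin R(\ensemble{M})$, so $f(1)$ may be $0$.

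The paper's proof spends most of its length (all of Part~II and cases I.1.a, I.2.a) on precisely these configurations, where $f$ has profiles such as $0,1,0$ or $0,-1,0$ or $-1,0$ or $0,1$ rather than the monotone $-1,0,+1$ staircase; there $f$ is not monotone, yet convergence still holds because a zero plateau of positive length sits adjacent to an endpoint and the update rule of Algorithm~\ref{Bisect} keeps such a plateau inside the nested intervals. Your framework would recover the full statement with modest extra work: $g(0)=0$ and $g(1)=1$ are forced, so it suffices to enumerate the four patterns of $\left(h(0),h(1)\right)$, note that $h$ is constant whenever $h(0)=h(1)$ (since it has at most one jump), and verify the plateau/invariant argument for each resulting profile. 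As written, though, your proof establishes only the sub-case $x\in R(\ensemble{M})$ and $y\notin R(\ensemble{M})$.
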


The theorem straightforwardly extends to the case when the image of $\alpha$ crosses the hidden region a finite number of times. 
This is because segmenting the interval $[0,1]$ sufficiently many times yields subintervals that fulfill the conditions in Theorem \ref{convergence}.

\begin{example}
    Figure \ref{fig:INTERVAL} illustrates the process, which we will term the ``cut trick.'' In the right plot, six equally spaced points denoted as $t_1, \ldots, t_6 \in [0, 1]$ divide the original line segment (on the left plot) into five equal-length parts. Dots represent inliers (in  $R(\mathcal{M})$), while crosses represent outliers (not in $R(\mathcal{M})$). Although the original image for $t \in [0, 1]$ does not satisfy the conditions of the theorem, the segments corresponding to $[t_2, t_3]$, $[t_4, t_5]$, and $[t_5, t_6]$ do meet these conditions. Consequently, the bisection method can generate hidden outliers within each of these segments.
\end{example}
\begin{figure}
    \begin{subfigure}{.45\linewidth}
        \centering
        \includegraphics[width=.8\linewidth]{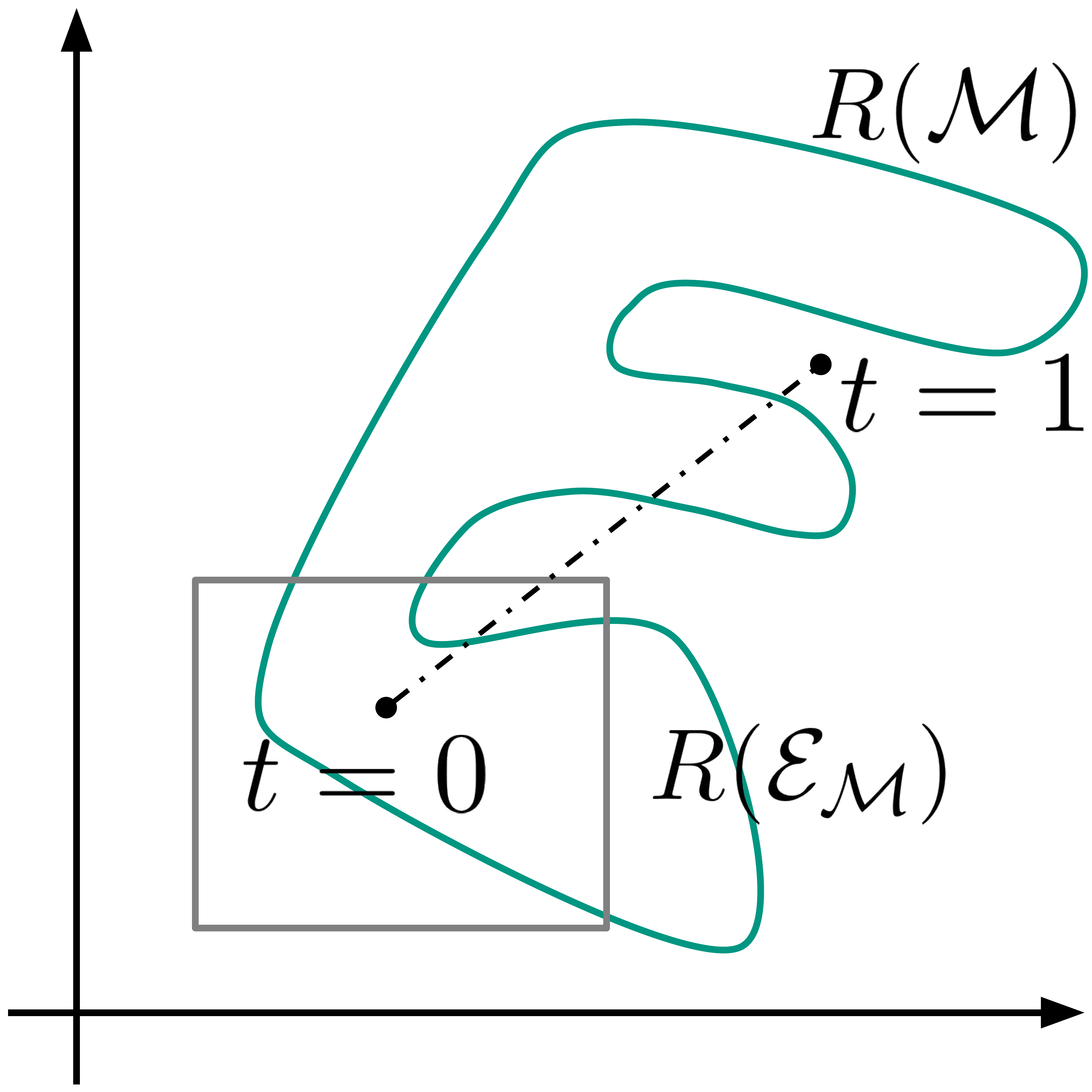}
        \caption{Before cuts}
        \label{sfig:INTERVAL1}
    \end{subfigure}
    \begin{subfigure}{.45\linewidth}
        \centering
        \includegraphics[width=.8\linewidth]{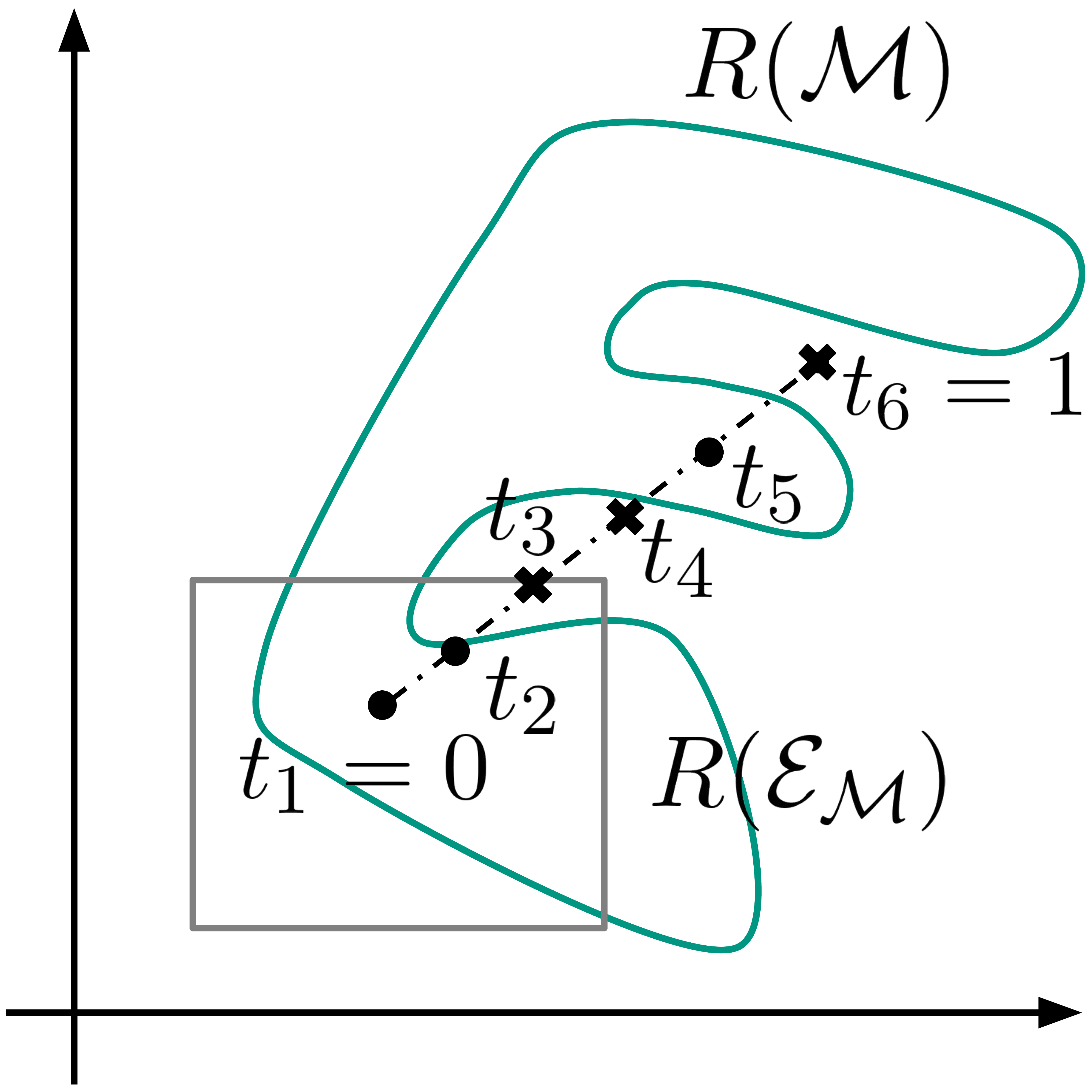}
        \caption{After cuts}
        \label{sfig:INTERVAL2}
    \end{subfigure}
    \caption{An example of the ``cut trick'' 
    .} 
    \label{fig:INTERVAL}
\end{figure}

\subsection{The \textsc{Bisect} Algorithm}
Combining all the above results, we propose \textsc{Bisect} (Algorithm \ref{Bisect}). It works with the following steps: 
\begin{enumerate}
    \item Randomly select an origin, $\Tilde{o} \in R(\mathcal{M})\cap D$. 
    If $\mathcal{M}$ is a composition of two functions, $m:X\rightarrow \mathbb{R}$, a scoring function, and $\mathcal{T}:\mathbb{R}\rightarrow \{0,1\}$, a threshold function, \textsc{Bisect} selects $\Tilde{o}$ using the weighted distribution obtained by the scoring of $m$. We do this as we found it to be superior to random selection during preliminary experiments. \label{origin} 
    \item Select a random direction $v$ in $\mathbb{S}^{d-1}(\mathbb{R})$, the $d$-dimensional unitary sphere. Define a sufficiently large $L\in\mathbb{R}$ 
    so that the point $y = \Tilde{o} + Lv$ is not in $R(\mathcal{M})$ (Lines~1--3). If $y\in R(\mathcal{M})$, restart \textsc{Bisect} with a new origin (Lines~5--7).
    \label{L} 
    \item Obtain a subinterval of the convex combination $\alpha(t) = ty + (1-t)\Tilde{o}$ using the ``cut trick'' (Line~4). 
    \item 
    For this subinterval, perform the bisection method for $f=F\circ\alpha$ to obtain one hidden outlier (Lines 9--14).  
    \label{Calculations}
\end{enumerate}

\begin{algorithm}[t]
    \caption{\textsc{Bisect}}
        \label{Bisect}
    \begin{algorithmic}[1]
    \REQUIRE $\Tilde{o},~n_\text{cuts},$ and $\mathcal{M}~\text{\&}~\ensemble{M}$ fitted with $D$.
    \STATE $v \gets Unif(1,\mathbb{S}^{d-1}(\mathbb{R})) $ 
    \STATE $l \gets \underset{x\in D}{\max} \|x\|$
    \STATE $L = l + \textrm{Unif}\left(1, (-\frac{l}{2},l)\right)$
    \STATE $\{a,b\} \gets \textsc{Interval}(\Tilde{o},v,0,L,n_\text{cuts})$
    \IF{$\{a,b\} = \emptyset$}
        \STATE Select new origin $\Tilde{o}'$
        \STATE \textsc{Bisect}($\Tilde{o}',n_\text{cuts},\mathcal{M},\ensemble{M}$)
    \ENDIF
    \STATE $c \gets \frac{a + b}{2}$
    \WHILE{$F(c) \neq 0$}
    \IF{$F(a) = F(c)$}
    \STATE $a \gets c$
    \ELSE 
    \STATE $b \gets c$
    \ENDIF
    \STATE $c \gets \frac{a + b}{2}$
    \ENDWHILE
    \RETURN $c$
    \end{algorithmic}
\end{algorithm}

We now discuss the complexity of \textsc{Bisect}.


\begin{proposition}\label{complexity}
    Let $n_\text{subs}$ be the number of selected subspaces for the ensemble. 
    The worst-case complexity of \textsc{Bisect} is  $\mathcal{O}\left(\log(\frac{L}{n_\text{cuts}}) \cdot (n_\text{subs}+n_\text{cuts}) \cdot \aleph\right)$ where $\aleph$ is the inference complexity of the adversary $\mathcal{M}(\cdot)$.
\end{proposition}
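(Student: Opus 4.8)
The plan is to bound the running time of Algorithm~\ref{Bisect} by accounting separately for its three cost-bearing phases — the set-up (Lines~1--3), the cut trick \textsc{Interval} (Line~4), and the bisection loop (Lines~9--18) — and then to express each phase through the number of detector inferences it triggers, since a detector inference (unit cost $\aleph$) is the dominant operation. The set-up samples a direction and fixes $L$; apart from the one-time scan $\max_{x\in D}\|x\|$, it does not invoke the detectors in a way that scales with $n_\text{subs}$ or $n_\text{cuts}$, so I would treat it as a lower-order term and concentrate the analysis on the detector-bound work that appears in the claimed bound.

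I would first pin down the cost of a single evaluation of $F$. Computing $F(x)$ requires $\mathcal{M}(x)$, at cost $\aleph$, together with $\ensemble{M}(x)$, which aggregates $n_\text{subs}$ subspace detectors whose inferences each cost at most $\aleph$; hence one full evaluation of $F$ is $\mathcal{O}((n_\text{subs}+1)\aleph)=\mathcal{O}(n_\text{subs}\,\aleph)$. For the cut trick, the parameter range $[0,L]$ is split into $n_\text{cuts}$ equal pieces, so isolating an admissible subinterval — one meeting the hypotheses of Theorem~\ref{convergence} — amounts to inspecting the detector values at the $\mathcal{O}(n_\text{cuts})$ cut points. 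The key observation I will exploit is that, by Proposition~\ref{convexexistence}, certifying that a subinterval brackets a hidden outlier only requires detecting a single $\mathcal{M}$-crossing, so the per-cut-point cost here is $\mathcal{O}(\aleph)$ rather than $\mathcal{O}(n_\text{subs}\,\aleph)$, giving a cut-trick cost of $\mathcal{O}(n_\text{cuts}\,\aleph)$.

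Next I would bound the bisection loop. The admissible subinterval returned by the cut trick has length $\Theta(L/n_\text{cuts})$, and each iteration halves it, so reaching a root of $f=F\circ\alpha$ up to the fixed tolerance takes $\mathcal{O}(\log(L/n_\text{cuts}))$ iterations; each iteration evaluates $F$ once at the midpoint at cost $\mathcal{O}(n_\text{subs}\,\aleph)$, so the loop costs $\mathcal{O}(\log(L/n_\text{cuts})\cdot n_\text{subs}\cdot\aleph)$. I would then combine the phases additively and use $\log(L/n_\text{cuts})\ge 1$ to absorb the cut-trick term, since $n_\text{cuts}\,\aleph\le\log(L/n_\text{cuts})\cdot n_\text{cuts}\cdot\aleph$, yielding the total $\mathcal{O}\!\left(\log(L/n_\text{cuts})\cdot(n_\text{subs}+n_\text{cuts})\cdot\aleph\right)$ as stated.

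The step I expect to be the main obstacle is the worst-case treatment of the restart branch (Lines~5--7): when \textsc{Interval} returns $\emptyset$, \textsc{Bisect} recurses with a fresh origin, and a careless accounting could let these restarts compound the bound. I would argue that the stated complexity measures one successful pass — conditioned on obtaining a non-empty bracket — and that each failed pass costs at most a single cut-trick sweep, so restarts leave the asymptotic order per generated outlier unchanged. A secondary delicate point is justifying that the cut-trick phase truly needs only $\mathcal{O}(\aleph)$ per cut point rather than the full $\mathcal{O}(n_\text{subs}\,\aleph)$; this is exactly where Proposition~\ref{convexexistence} does the work, since it lets one certify an admissible subinterval from the $\mathcal{M}$-crossing alone and defer the ensemble evaluations to the logarithmically many bisection steps, which is what keeps the $n_\text{cuts}$ and $n_\text{subs}$ contributions from multiplying.
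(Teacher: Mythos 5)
Your proof is correct and follows essentially the same route as the paper's: bound the number of bisection iterations by $\mathcal{O}(\log(L/n_\text{cuts}))$ on a subinterval of length $L/n_\text{cuts}$ and multiply by the detector-inference cost, with the cut-trick work absorbed into the $(n_\text{subs}+n_\text{cuts})$ factor. Your accounting is somewhat more granular than the paper's --- which simply asserts the product bound $\mathcal{O}(n_\text{iter}\cdot(n_\text{subs}+n_\text{cuts})\cdot\aleph)$ as trivial and spends its effort re-deriving the logarithmic iteration count via a geometric series --- and you additionally handle the restart branch, which the paper's proof leaves implicit.
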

According to this proposition, the parameter $n_\text{cuts}$ used for the cut trick affects the complexity of \textsc{Bisect}. In practice, low values of $n_\text{cuts}$ are sufficient, as
we now explain.

\begin{algorithm}[t]
    \caption{\textsc{Interval} (cut trick)}
    \label{Interval}
    \begin{algorithmic}[1]
        \REQUIRE $\Tilde{o},v,a,b,n_\text{cuts}$
        \STATE Initialize $I = \emptyset$
        \STATE $\{t_i\}_{i=1}^{n_\text{cuts} + 1} \gets \text{sequence}(a,b,n_\text{cuts})$
        \FOR{ $i$ in  $1$ to $n_\text{cuts}$}
            \STATE check$_i$ $\gets \mathcal{M}(\Tilde{o} + t_iv)$
            \IF{check$_i$$\neq$check$_{i-1}$}
            \STATE Add $\{t_{i-1},  t_i\}$ to $I$
            \ENDIF
        \ENDFOR
        \STATE $\{l,r\} \gets Unif(1,I)$
        \RETURN $l,r$
    \end{algorithmic}
\end{algorithm}

\subsubsection{Selecting the Number of Cuts.} 

The number of iterations needed for the bisection method to generate a hidden outlier in the worst case can be calculated as~$n_\text{iter} = \textit{Int}(\frac{\log({L}/{n_\text{cuts}}) - \log(Err)}{\log(2)}  - 1)$. Where $Err$ is the error rate and $Int(\cdot)$ is the mapping to the nearest integer. For $Err=0.05$ and a unitary interval, five cuts on the interval are the smallest number of cuts that results in only a single iteration needed to converge. 
Since we have found through experimentation that five cuts are usually enough to satisfy the conditions of Proposition \ref{convergence}, we recommend setting $n_\text{cuts}=5$.

\subsubsection{Complexity Comparison to \textsc{Hidden}.} 
For a fixed $n_\text{cuts}$, $\mathcal{O}\left(n_\text{samp} \cdot \log(L) \cdot n_\text{subs} \cdot \aleph\right)$ is the worst case complexity of \textsc{Bisect} to generate $n_\text{samp}$ hidden outliers. The corresponding worst-case complexity of \textsc{Hidden} is $\mathcal{O}\left(n_\text{samp} \cdot\mathcal{P}(D, \varepsilon)\cdot n_\text{subs} \cdot \aleph\right)$, where the parameter $\mathcal{P}(D, \varepsilon)$ inversely correlates with the probability of generating a hidden outlier for a given data at a particular $\varepsilon$~\cite{hidden}. It is impossible to estimate this parameter beforehand. We expect that in practice it will be greater and more irregular than $\log(L)$. 
\section{Experiments}

In this section, we 
empirically demonstrate the utility of hidden outliers
generated with \textsc{Bisect}.
First, we describe the synthetic and real datasets and configurations of hidden outlier generation methods for our experiments. 
Next, we compare the runtime performance of \textsc{Bisect} and \textsc{Hidden} on datasets with varying complexity.
After that, we study how various outlier detection tasks can benefit from generating hidden outliers. 
To finalize, we briefly comment on the limitations of the experimental study at the end of the section.
We implemented the experiments in R and Python\footnote{Using \texttt{reticulate} \cite{reticulate}.} and run them on a ThinkPad P14s-gen2 with 16GB of RAM using Ubuntu v22.04.1. 
For all calculations, used the CPU, a Ryzen PRO 7 5850u.

\subsection{Datasets}

\subsubsection{Synthetic Data.}

\begin{table}[t]
\centering
\begin{tabular}{ccc}
\toprule
\# Clusters & \# Features & \# Observations \\ 
\midrule
1, 2, 5 & $7, 15, 30, 50, 100, 150$ & $1000$ \\
\bottomrule
\end{tabular}
\caption{Characteristics of synthetic datasets.}
\label{sythetic}
\end{table}

To comprehensively evaluate hidden outlier generation methods under controlled conditions, we produce several synthetic datasets using a multidimensional clustered Gaussian distribution.
We systematically varied the number of clusters, features (columns), and observations (rows).
Table~\ref{sythetic} provides an overview of the dataset characteristics; for each combination of parameters, we randomly generated five synthetic datasets.

\subsubsection{Real Data.}

\begin{table}[t]
    \centering
    \begin{tabular}{llll}
    \toprule
        Dataset & \# Features & $\lvert D^\textit{full}\rvert$ & \# Outliers \\ 
    \midrule
        Wilt & 5 & 4819 & 261 \\ 
        Pima & 8 & 768 & 268 \\ 
        Stamps & 9 & 340 & 31 \\ 
        PageBlocks & 10 & 5473 & 560 \\ 
        Heart Disease & 13 & 270 & 120 \\ 
        Annthyroid & 21 & 7129 & 534 \\ 
        Cardiotocography & 21 & 2114 & 471 \\ 
        Parkinson & 22 & 195 & 147 \\ 
        Ionosphere & 32 & 351 & 126 \\ 
        WPBC & 33 & 198 & 47 \\ 
        SpamBase & 57 & 4207 & 1813 \\ 
        Arrhythmia & 259 & 450 & 206 \\ 
    \bottomrule
    \end{tabular}
    \caption{Characteristics of real datasets.}
\label{datasets}
\end{table}
We use real datasets\footnote{\url{https://www.dbs.ifi.lmu.de/research/outlier-evaluation/}} collected by \citeauthor{Campos2016} for evaluating unsupervised outlier detection. 
To ensure a reasonable runtime, 
we only consider datasets containing fewer than 10,000 observations and fewer than 1,000 features. 
To ensure reliable performance estimates, we only retain datasets with more than 30 outliers. 
Table~\ref{datasets} summarizes the resulting 12 datasets, referred to as $D^\textit{full}$. 
We also use modified versions of these datasets, denoted as $D$, where the outlier class is downsampled to 2\% of the total number of observations. 
Whenever possible, we obtain $D$ from~\cite{Campos2016}. 
For the Ionosphere and WPBC datasets, we created the downsampled versions ourselves.

\subsection{Method Configurations}
\label{ssection:configurations}
We use \textsc{Bisect} and \textsc{Hidden} with LOF~\cite{lof}
as an adversary. In some experiments, we also tried the KNN outlier detection method~\cite{kNN} 
as an alternative adversary for \textsc{Bisect} (\textsc{Bisect}\textsubscript{K}). 
We configure \textsc{Bisect} to use the weighted origin selection and fix the number of cuts to 5. For \textsc{Hidden}, we set $\varepsilon=0.1$, as we found larger values to be inefficient when generating hidden outliers for certain datasets, see~Appendix. 

To maintain tractability, we limited the number of subspaces used to generate hidden outliers to a maximum of 2048; 
when necessary, we used feature bagging without repetition~\cite{fb} for subspace selection.
When a generational method failed to produce a successful candidate within a 30-minute timeframe, we regarded the respective experiment as unsuccessful (marked as $ot$).
For all other methods used in our experiments, we adopted the default parameter settings suggested by the authors or provided by their respective implementations.

\subsection{Hidden Outlier Generation Efficiency}
\begin{figure*}
\centering
\begin{subfigure}{.245\textwidth}
  \centering
  \includegraphics[width=1\linewidth]{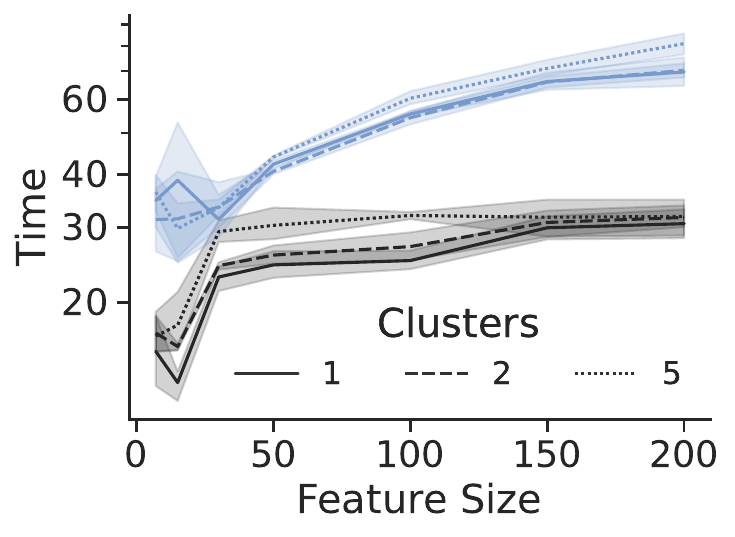}
  \caption{Synthetic datasets} 
  \label{sfig:synth} 
\end{subfigure}
\begin{subfigure}{.245\textwidth}
  \centering
  \includegraphics[width=1\linewidth]{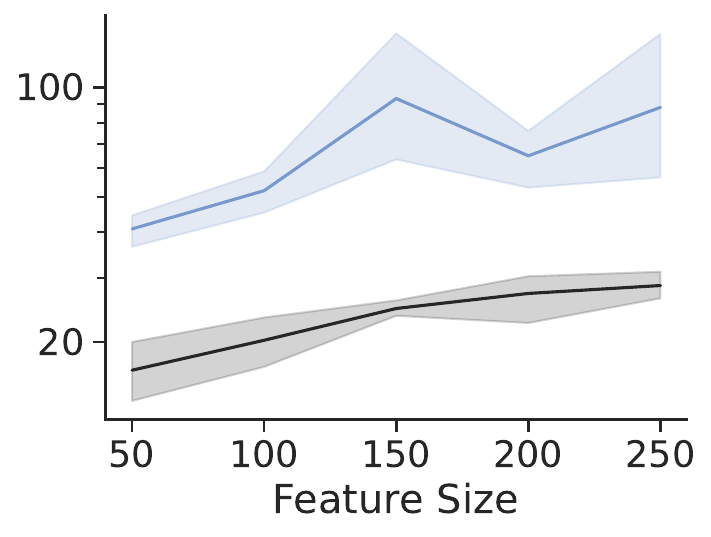}
  \caption{{Arrythmia}}
  \label{sfig:arr}
\end{subfigure}%
\begin{subfigure}{.245\textwidth}
  \centering
  \includegraphics[width=1\linewidth]{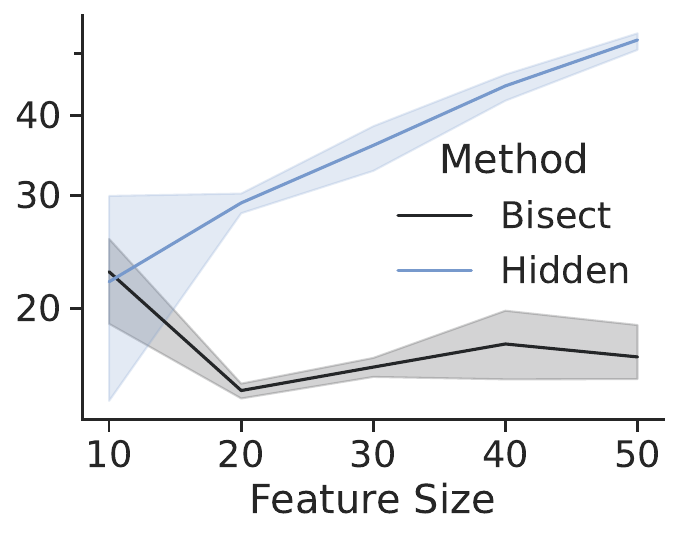}
  \caption{{SpamBase}}
  \label{sfig:sb}
\end{subfigure}%
\begin{subfigure}{.245\textwidth}
  \centering
  \includegraphics[width=1\linewidth]{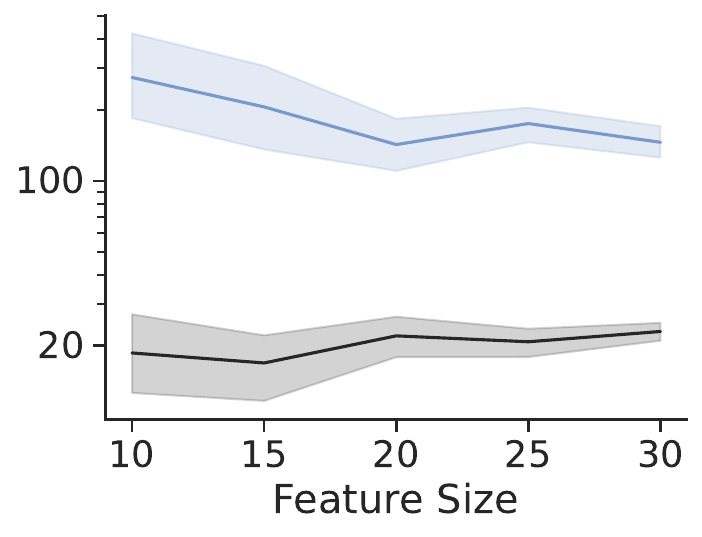}
  \caption{{WPBC}}
  \label{sfig:wpbc}
\end{subfigure}%

\caption{Time to generate 500 hidden outliers (in seconds) in synthetic and real data contingent on feature count.}
\label{fig:time-all}
\end{figure*}
In this section, we compare the time required by \textsc{Bisect} and \textsc{Hidden} to generate 500 hidden outliers (in seconds).

\subsubsection{Synthetic Data.}

Figure \ref{sfig:synth} shows the time needed to generate 500 hidden outliers with \textsc{Bisect} and \textsc{Hidden}, contingent on the number of features in a dataset.
\textsc{Hidden} requires 2--3 times more runtime than \textsc{Bisect}, and this difference increases with dimensionality. 
Dimensionality has a limited impact on the generation time when the number of subspaces is fixed, as expected by Proposition \ref{complexity}.
Additionally, the number of clusters in the data has little effect on the performance of \textsc{Hidden} or \textsc{Bisect}.

\subsubsection{Real Data.}
We used four real datasets with the highest feature count.
To achieve the desired dimensionality, we created five lower-dimensional datasets by selecting five distinct subsets of features with minimal overlap. 
Figure~\ref{fig:time-all} plots the time required to generate 500 hidden outliers using \textsc{Bisect} and \textsc{Hidden} for each dataset, depending on dimensionality. 
As with synthetic data, we observe that \textsc{Bisect} is several times faster than \textsc{Hidden}.

Table~\ref{tab:time} summarizes the generation times from our experiments, both with synthetic and real data, for \textsc{Bisect} and \textsc{Hidden}. 
It is evident that the performance of \textsc{Bisect} is significantly more predictable than that of \textsc{Hidden}~-- the interquartile range for the generation time is an order of magnitude lower for \textsc{Bisect}.

\begin{table}
\centering
\begin{tabular}{llllll|l}
\toprule
 & Min & $Q_1$ & $Q_2$ & $Q_3$ & Max & IQR \\ \midrule
{Bisect} & 9.9 & 15.3 & 19.6 & 25.2 & 44 & 9.9 \\
{Hidden} & 9.2 & 46.2 & 72.6 & 124.7 & 566 & 78.4 \\ \bottomrule
\end{tabular}
\caption{Time to generate 500 hidden outliers (in seconds).}
\label{tab:time}
\end{table}

\subsection{Outlier Detection Experiments}

\subsubsection{One-class Classification.}

\begin{table*}[ht]
\centering
\begin{tabular}{lllll|ll|llll}
\hline
Dataset & \textsc{Bisect} & \textsc{Hidden} & HB & LOF & $\textsc{Bisect}_\text{K}$ & KNN & IForest & DSVDD & OCSVM & GAAL \\ \hline
Wilt & \textit{\textbf{0.975{\color{red}*}}} & {0.942{\color{red}*}} & 0.520$^{\color{olive}\dagger}$ & 0.556 & \textbf{0.970{\color{red}*}} & 0.537 & 0,720 & 0.529 & 0.643 & 0.653 \\
Pima & 0.579$^{\color{olive}\dagger}$ & 0.499$^{\color{olive}\dagger}$ & 0.523$^{\color{olive}\dagger}$ & \textbf{0.695} & 0.563$^{\color{olive}\dagger}$ & \textit{\textbf{0.747}} & 0,718 & 0.511 & 0.664 & 0.498 \\
Stamps & \textbf{0.954{\color{red}*}} & 0.699$^{\color{olive}\dagger}$ & 0.952{\color{red}*} & 0.895 & \textit{\textbf{0.961}} & 0.957 & 0,935 & 0.766 & 0.877 & 0.913 \\
PageBlocks & \textbf{0.952} & 0.911& 0.895$^{\color{olive}\dagger}$ & 0.931 & \textbf{0.925{\color{red}*}} & 0.661 & \textit{0,957} & 0.676 & 0.619 & 0.415 \\
Heart Disease & 0.806 & 0.825 & \textit{\textbf{0.829}} & 0.839 & 0.839 & \textbf{0.813} & 0,799 & 0.737 & 0.801 & {0.818} \\
Annthyroid & \textbf{0.924}$^{\color{red}*}$ & 0.877 & 0.514$^{\color{olive}\dagger}$ & 0.767 & \textit{\textbf{0.944{\color{red}*}}} & 0.747 & 0,827 & 0.764 & 0.596 & 0.617 \\
Cardio... & \textbf{0.826{\color{red}*}} & 0.812 & 0.730$^{\color{olive}\dagger}$ & 0.798 & \textit{\textbf{0.845{\color{red}*}}} & 0.776 & 0,757 & 0.626 & 0.840 & 0.540 \\
Parkinson & \textbf{0.822}{\color{red}*} & 0.777 & 0.779 & 0.745 & \textbf{0.842} & 0.838 & {\textit{0,913}} & {0.839} & 0.738 & $ot$ \\
Ionosphere & 0.927 & 0.535$^{\color{olive}\dagger}$ & \textbf{0.950} & 0.946 & 0.922$^{\color{olive}\dagger}$ & \textit{\textbf{0.970}} & {0,965} & {0.951} & 0.801 & 0.757 \\
WPBC & \textbf{0.613} & 0.591 & 0.588 & 0.574 & 0.534 & \textit{\textbf{0.635}} & 0,523 & 0.527 & 0.499 & {0.632} \\
SpamBase & 0.793{\color{red}*} & \textit{\textbf{0.848{\color{red}*}}} & {0.793{\color{red}*}} & 0.731 & \textbf{0.800{\color{red}*}} & 0.700 & 0,791 & 0.651 & 0.629 & 0.700 \\
Arrhythmia & \textbf{0.756{\color{red}*}} & 0.745 & 0.719 & 0.727 & \textit{\textbf{0.769{\color{red}*}}} & 0.734 & 0,755 & 0.740 & 0.745 & 0.724 \\ \hline
\end{tabular}
\caption{Median performance of the different one-class classification methods. DSVDD stands for DeepSVDD.}
\label{tab:unsup}
\end{table*}

We leverage hidden outliers for self-supervised one-class classification. 
By augmenting the training data with these outliers, we train a binary classifier for one-class classification, following these steps: 

\begin{enumerate}
    \item Divide the dataset $D^\textit{full}$ into $D_\textit{train}$ (80\% inliers and no outliers) and $D_\textit{test}$ (20\% inliers and all outliers). 
    \item Using $D_\textit{train}$, generate $\lvert D_\textit{train}\rvert$ outliers with an outlier generation method and add them to $D_\textit{train}$.\label{UOD:step2}
    \item Train a classifier on $D_\textit{train}$ to distinguish inliers and outliers, and evaluate its performance on $D_\textit{test}$.\label{UOD:step3} 
\end{enumerate}
We repeat the experiment seven times for each dataset with different random splits into $D_\textit{train}$ and $D_\textit{test}$. 
In Step~\ref{UOD:step2}, we use 
\textsc{Bisect}, \textsc{Hidden}, and Hyperbox (HB). 
HB is a na\"ive baseline that samples points uniformly from the minimal bounding box surrounding $D_\textit{train}$ and retains only those marked as outliers by LOF. 
In Step~\ref{UOD:step3}, we use random forest\footnote{Implementation from \texttt{caret} \cite{caret}} since it performs well in binary classification tasks \cite{rf}.

We compare this self-supervised approach with LOF and KNN adjusted for one class classification as they serve as adversaries for \textsc{Bisect}. 
We also compare it to DeepSVDD \cite{dsvdd}, a deep outlier detection method, and OCSVM, a one-class SVM with a radial kernel \cite{ocsvm}. From generative baselines, we included AnoGAN, a popular self-supervised baseline, and MO-GAAL,\footnote{Implementation from \texttt{pyod} \cite{pyod}} a recent deep-learning-based self-supervised method~\cite{AnoGAN, GAAL}. 

Table~\ref{tab:unsup} presents the test ROC AUC scores. 
We group LOF-related and KNN-related methods together and identify the best method within each group using bold font. 
If a self-supervised method significantly outperforms its adversary (p-value of the Wilcoxon signed rank test $\leq0.05$), we mark the respective AUC value with an asterisk. 
Conversely, the dagger symbol indicates when a self-supervised method performs worse than its adversary. 
Additionally, the overall best-performing method for each dataset is in italics.

We observe that Bisect-based hidden outlier methods often outperform their adversary counterparts by much. 
This suggests that we can recommend replacing conventional full-space outlier detection methods with Bisect-based self-supervised methods with respective adversaries. 
In addition, Bisect-based methods offer a runtime improvement for inference when dealing with high-dimensional datasets. 
For instance, using the SpamBase dataset, RF trained after \textsc{Bisect} with LOF as an adversary processes each test point nearly 10 times ($31$ms vs $3.4$ms) faster in average than LOF. 


\begin{table*}[ht]
\centering
\begin{tabular}{lllllllll|l}
\hline
Dataset & \textsc{Bisect} & \textsc{Hidden} & \textsc{HB} & Plain RF & cWGAN & SMOTE & DB SMOTE & ADASYN & \# out. \\ \hline
Wilt & \textbf{0.957} & 0.945 & 0.500$^{\color{olive}\dagger}$ & 0.951 & 0.953 & 0.947 & \textbf{0.956} & {0.945} & 19 \\
Pima & {0.625{\color{red}*}} & \textbf{0.700{\color{red}*}} & \textbf{0.642{\color{red}*}} & 0.589 & 0.584 & $na$ & $na$ & {$na$} & 2 \\
Stamps & \textbf{0.967{\color{red}*}} & 0.898$^{\color{olive}\dagger}$ & 0.879$^{\color{olive}\dagger}$ & 0.948 & \textbf{0.969{\color{red}*}} & $na$ & $na$ & {$na$} & 2 \\
PageBlocks & 0.982$^{\color{olive}\dagger}$ & 0.910$^{\color{olive}\dagger}$ & 0.980$^{\color{olive}\dagger}$ & \textbf{0.993} & \textbf{0.996} & 0.966$^{\color{olive}\dagger}$ & 0.980$^{\color{olive}\dagger}$ & {0.961}$^{\color{olive}\dagger}$ & 20 \\
Heart Disease & 0.814{\color{red}*} & \textbf{0.820} & \textbf{0.831} & 0.731 & 0.723 & $na$ & $na$ & {$na$} & 1 \\
Annthyroid & \textbf{0.981} & \textbf{0.990{\color{red}*}} & 0.471$^{\color{olive}\dagger}$ & 0.969 & 0.980 & 0.979 & 0.980 & {0.978} & 27 \\
Cardio... & \textbf{0.921{\color{red}*}} & 0.838 & 0.641$^{\color{olive}\dagger}$ & 0.895 & 0.916 & 0.913 & 0.904 & {\textbf{0.918}} & 7 \\
Parkinson & 0.549 & $ot$ & $ot$ & \textbf{0.590} & \textbf{0.563} & $na$ & $na$ & {$na$} & 1 \\
Ionosphere & \textbf{0.937{\color{red}*}} & 0.735 & 0.905 & 0.882 & \textbf{0.931} & $na$ & $na$ & {$na$} & 1 \\
WPBC & \textbf{0.615} & \textbf{0.640} & 0.566 & 0.602 & 0.547$^{\color{olive}\dagger}$ & $na$ & $na$ & {$na$} & 1 \\
SpamBase & 0.917{\color{red}*} & 0.868$^{\color{olive}\dagger}$ & 0.738$^{\color{olive}\dagger}$ & 0.908 & 0.903 & 0.920{\color{red}*} & \textbf{0.921}{\color{red}*} & {\textbf{0.921}}{\color{red}*} & 11 \\
Arrhythmia & \textbf{0.975{\color{red}*}} & \textbf{0.973}{\color{red}*} & 0.798{\color{red}*} & 0.689 & 0.924{\color{red}*} & $na$ & $na$ & {$na$} & 1 \\ \hline
\end{tabular}
\caption{Median performance of a random forest coupled with a generational method.}
\label{tab:balancing}
\end{table*}

\subsubsection{Supervised Outlier Detection.}

In this section, we perceive outlier detection as an imbalanced classification problem, with the train set having very few instances of the minority class (outliers). 
We use artificial hidden outliers to help balance the data, which enables us to train a binary classification model for outlier detection with the augmented set. 
The experiment has the following steps:
\begin{enumerate}
    \item Take a dataset $D$ with 2\% outliers and split it randomly into 20\% train set ($D_\textit{train}$) and 80\% test set ($D_\textit{test}$). 
    \item Add to $D_\textit{test}$ outliers from $D^\textit{full}$, excluding those in $D$, to get a more reliable outlier detection quality estimate. 
    \item Add extra outliers to $D_\textit{train}$ with outlier generation to balance outlier and inlier numbers. Do the same with oversampling, in order to have reference points.~\label{step3} 
    \item Train a classifer on $D_\textit{train}$ and evaluate it on $D_\textit{test}$.\label{step4}
\end{enumerate}
We repeat the experiment seven times for each dataset with random splits into $D_\textit{train}$ and $D_\textit{test}$. 
In Step~\ref{step4}, we employ random forest.
In Step~\ref{step3}, we use \textsc{Bisect}, \textsc{Hidden}, and Hyperbox, as before.
The oversampling methods\footnote{Implementations from \texttt{smotefamily} \cite{smotefamily}} used in Step~\ref{step3} are the neighbors-\-based approach SMOTE, the density-\-based technique ADASYN, and the clustering-\-based method DB-\-SMOTE \cite{SMOTE,Adasyn,DBSMOTE}. We also employed an outlier-based oversampling method using a cWGAN \footnote{Code by https://github.com/johaupt/GANbalanced/}, like multiple authors have proposed \cite{cWGAN,cWGAN_arxiv,cWGAN_engelmann}. 
Finally, we compare to random forest trained using only the train data. 

Table~\ref{tab:balancing} shows test ROC AUC scores. The two best methods for each dataset are in bold. 
A method significantly outperforming random forest on non-augmented data (p-value of Wilcoxon signed rank test $<0.05$) has an asterisk, while an inverse case is marked with the dagger symbol. 
The last column contains the outlier count in $D_\textit{train}$.

The \textsc{Bisect} method is a clear winner overall, while conventional oversampling techniques are not applicable for datasets with sparse outliers as they have less than 3 minority class objects (marked as $na$). The only other method thought for scarce examples of the minority class, cWGAN, failed to significantly improve the classifier as much as our approach.
Hence, we also recommend the \textsc{Bisect} method for oversampling in the case with few recorded outliers.


\subsubsection{Limitations and Future Work.}\label{competitors}

To cover recent and popular methods, our experiments include various competitors for supervised outlier detection and one-class classification. 
However, we faced difficulties including certain methods due to either outdated or unavailable implementations~\cite{ocrf, active,cWGAN}, or implementations designed solely to replicate outcomes on specific benchmark datasets~\cite{Oversamplingwithoutliers, Dlamini2021}. 

In addition, there are other approaches, largely or completely unexplored for these tasks but adaptable like our use of \textsc{Bisect}. 
For example, beyond the Hyperbox and \textsc{Hidden} methods we have included in our comparison, 17 outlier generation methods reviewed in~\cite{Outsurvey} fall into this category.
The experimental design space could also extend to other conventional outlier detection methods from \cite{Campos2016}~--- beyond LOF, KNN, and OCSVM which we have covered. 
These could serve as further adversaries of \textsc{Bisect} or \textsc{Hidden}, or as competitors in one-class classification. 
However, such a broad comparison exceeds the scope of one conference publication, and we see it as future work. 



\section{Conclusions}
Generating outliers is useful. 
Nevertheless, most outlier generation methods disregard the ``multiple-views'' property of outliers in high-dimensional spaces. 
Synthetic hidden outliers are the sole exception. 
However, the existing method for generating hidden outliers is inefficient and sensitive to hyperparameter selection, as we have shown. Furthermore, the utility of hidden outliers remains to be shown.

In this paper, we have established that synthetic hidden outliers exist 
under versatile conditions 
and, based on this theory, propose a way to search for respective candidates efficiently, which we call \textsc{Bisect}.
\textsc{Bisect} is notably faster than the current alternative. 

Next, we developed a methodology that makes use of outliers generated with \textsc{Bisect} to enhance outlier detection in the context of one-class classification and highly unbalanced supervised outlier detection tasks. 
In both scenarios, the use of \textsc{Bisect} yielded significant improvements over conventional methods, surpassing widely adopted alternatives tailored for these tasks. 
These results confirm the potential of hidden outliers to advance various outlier detection tasks.

\section{Aknowledgments}
This work was supported by the Ministry of Science, Research and the Arts Baden Württemberg, project Algorithm Engineering for the Scalability Challenge (AESC).

\bibliography{bib}{}

\appendix

\section{Theoretical Appendix}
This appendix includes all the proofs for the theoretical results presented in Section 3, as well as other supplementary results needed to prove the said results. We include all the proofs in the order of inclusion in the text. For completeness, the statements are repeated before the proof.

Additionally, we are going to introduce a couple of notations needed for the proofs. First, let us consider everything from Section 3. Now, let $C(a,b)$ be the image of the convex combination of two points of $X$, $a$, and $b$. Unless is stated otherwise, $\alpha(t) = ty + (1-t)x$. Additionally, $C(a,b)_-$ is the image of the convex combination with $t\in(0,1]$ and $C(a,b)^+$ with $t \in [0,1)$. The set $C(a,b)_-^+$ is the convex combination with $t\in (0,1)$.

Whenever we write $B_X(z,\varepsilon)$ we refer to an open ball of $X$ centered in $z$ with radius $\varepsilon>0$. $B_A(z,\varepsilon)$, with $A\subset X$, stands for the open ball in the induced topology, i.e. $B_A(z,\varepsilon) = B_X(z,\varepsilon) \cap A$.

\begin{proposition}("Hidden outlier existence"):
    Let $x$ and $y$ be points in the previously defined metric space such that $x \in R(\mathcal{M})$ and $y \notin R(\mathcal{M})$. Assume that there exists a point $z$ in the convex combination of $x$ and $y$ such as $z \in \partial R(\mathcal{M}) \Rightarrow z \notin \partial R(\ensemble{M})$. 
    Then there exists 
    $z'$ in the convex combination of $x$ and $y$ such as $z' \in H_1(\mathcal{M}) \cup H_2(\mathcal{M})$.
\end{proposition}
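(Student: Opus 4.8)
The plan is to reduce the statement to the one-dimensional segment $C(x,y)$ and then exploit the fact that, at the crossing point $z$, the two boundaries $\partial R(\mathcal{M})$ and $\partial R(\ensemble{M})$ do \emph{not} coincide. First I would record that the path $\alpha(t)=ty+(1-t)x$ must leave $R(\mathcal{M})$: since $R(\mathcal{M})$ is closed with $x\in R(\mathcal{M})$ and $y\notin R(\mathcal{M})$, the set $T=\{t\in[0,1]:\alpha(t)\in R(\mathcal{M})\}$ is closed, contains $0$, and excludes $1$. Setting $t_0=\sup\{t:[0,t]\subseteq T\}$ yields a genuine crossing point $z=\alpha(t_0)\in\partial R(\mathcal{M})$: points just before $t_0$ on the segment lie in $R(\mathcal{M})$, while points $\alpha(t')$ with $t'>t_0$ arbitrarily close to $t_0$ lie outside $R(\mathcal{M})$. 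The hypothesis supplies such a boundary point with the extra property $z\notin\partial R(\ensemble{M})$, which is what drives the rest of the argument.

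Next I would extract a local dichotomy from $z\notin\partial R(\ensemble{M})$. Because $z$ is not on the boundary of $R(\ensemble{M})$, it lies either in the interior of $R(\ensemble{M})$ or in the interior of its complement. In either case there is a radius $\varepsilon>0$ such that $\ensemble{M}$ is constant on $B_X(z,\varepsilon)$, i.e. either $B_X(z,\varepsilon)\subseteq R(\ensemble{M})$ or $B_X(z,\varepsilon)\cap R(\ensemble{M})=\emptyset$.

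I would then combine the crossing with this ball. By continuity of $\alpha$, choose the segment parameters close enough to $t_0$ that the nearby interior point $q$ (with $q\in R(\mathcal{M})$) and the nearby exterior point $p=\alpha(t')$ (with $p\notin R(\mathcal{M})$) both lie in $B_X(z,\varepsilon)$. In the case $B_X(z,\varepsilon)\subseteq R(\ensemble{M})$, the point $p$ satisfies $p\notin R(\mathcal{M})$ and $p\in R(\ensemble{M})$, hence $p\in R(\ensemble{M})\setminus R(\mathcal{M})=H_2(\mathcal{M})$. In the complementary case $B_X(z,\varepsilon)\cap R(\ensemble{M})=\emptyset$, the point $q$ satisfies $q\in R(\mathcal{M})$ and $q\notin R(\ensemble{M})$, hence $q\in R(\mathcal{M})\setminus R(\ensemble{M})=H_1(\mathcal{M})$. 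Either way the witness lies on $C(x,y)$, providing the desired $z'$.

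The main obstacle is the first step: making rigorous that the boundary point delivered by the hypothesis is a \emph{true} crossing of $\partial R(\mathcal{M})$ along the segment — so that both inliers and outliers of $\mathcal{M}$ appear arbitrarily close to $z$ on $C(x,y)$ — rather than a tangential touch where the segment merely grazes $\partial R(\mathcal{M})$. I would address this by working with the canonical first-exit point $z=\alpha(t_0)$ above and arguing that the existential condition of the proposition is precisely what lets this crossing point be chosen off $\partial R(\ensemble{M})$; excluding the degenerate tangential configuration (the case ruled out in Figure~\ref{sfig:ho2}) is where the hypothesis does its real work, and some care with the endpoint edge case $t_0=0$ (where $z=x$ itself serves as the witness) would also be needed.
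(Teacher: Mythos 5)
Your proof is correct and follows the same basic strategy as the paper's (localize to the segment, find a boundary point $z$ of $R(\mathcal{M})$ on it, use $z \notin \partial R(\ensemble{M})$ to get a ball on which $\ensemble{M}$'s verdict is constant, then exhibit a nearby point on the segment where $\mathcal{M}$ disagrees), but your execution differs in two ways that actually strengthen the argument. First, the paper takes the boundary point $z$ as given by the hypothesis and asserts directly that every relative ball $B_{C(x,y)}(z,\varepsilon)$ meets $X\setminus R(\mathcal{M})$; this does not follow from $z\in\partial R(\mathcal{M})$ alone, since the segment could graze the boundary tangentially while staying inside $R(\mathcal{M})$. Your construction of $z=\alpha(t_0)$ as the first-exit point via $t_0=\sup\{t:[0,t]\subseteq T\}$ supplies exactly the missing ingredient: points of $X\setminus R(\mathcal{M})$ on the segment accumulate at $z$ by construction, and $z\in R(\mathcal{M})$ by closedness. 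Second, you treat both branches of the dichotomy ($z$ interior to $R(\ensemble{M})$ giving a witness in $H_2$, $z$ interior to the complement giving one in $H_1$) symmetrically and explicitly, where the paper works out only the first and dispatches the second with a one-line remark (and invokes Zorn's lemma where simply choosing an element of a nonempty set suffices). The one point you flag yourself is real and is shared with the paper: the hypothesis as literally written ("there exists $z$ such that $z\in\partial R(\mathcal{M})\Rightarrow z\notin\partial R(\ensemble{M})$") is vacuously satisfied by any $z\notin\partial R(\mathcal{M})$ and so does not by itself guarantee that \emph{your} canonical crossing point $\alpha(t_0)$ avoids $\partial R(\ensemble{M})$; under the evidently intended reading (the crossing point, or every $\mathcal{M}$-boundary point on the segment, lies off $\partial R(\ensemble{M})$, as in Figure~\ref{fig:ho}) your argument closes completely, and it is the more rigorous of the two.
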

\begin{proof}
    Assume $z \in \partial R(\mathcal{M})$, otherwise $z \in R(\mathcal{M})\setminus  R(\ensemble{M})$ trivially. Also, since $z\notin\partial R(\ensemble{M})$ by hypothesis, $z$ has to be in the interior, $ R(\ensemble{M})^\circ.$
    Now, since $z \in \partial R(\mathcal{M})$, by definition of boundary point, $\forall \varepsilon > 0,~B_X(z,\varepsilon)\cap(X\setminus R(\mathcal{M})) \neq \emptyset.$ Recall that, by the notion of induced topology, \[C(x,y)\subset X \Longrightarrow B_{C(x,y)}(z,\varepsilon) = B_X(z, \varepsilon) \cap C(x,y).\]
    Then, since $z\in \partial R(\mathcal{M})\cap C(x,y)$, we have
    \begin{equation}\label{cond1}
        \forall \varepsilon > 0,~ B_{C(x,y)}(z,\varepsilon) \cap X \setminus  R(\mathcal{M}) \neq \emptyset.
    \end{equation}
    Since,
    \[
        z \in R(\ensemble{M})^\circ \Longrightarrow \exists\varepsilon'>0,~ B_{C(x,y)}(z,\varepsilon') \subset R(\ensemble{M}),
    \]
    we have that \begin{equation}\label{cond2}
        B_{C(x,y)}(z,\varepsilon') = B_{C(x,y)}(z,\varepsilon')\cap R(\ensemble{M}).
    \end{equation}
    By (\ref{cond1}), \[B_{C(x,y)}(z,\varepsilon')\cap X\setminus R(\mathcal{M})\neq\emptyset.\] By (\ref{cond2}),  \[B(z,\varepsilon')\cap X\setminus R(\mathcal{M}) = B(z,\varepsilon')\cap X\setminus R(\mathcal{M})\cap R(\ensemble{M}) \neq \emptyset.\]
    By Zornn's Lemma, we can find $z'\in X$ such that\[
    z \in B_{C(x,y)}(z,\varepsilon')\cap R(\ensemble{M})\setminus  R(\mathcal{M})=C(x,y)\cap H_2.
    \]
    Changing $z\in \partial R(\mathcal{M})$ to $z\in\partial R(\ensemble{M})$ leads to the inclusion to $H_1$.
\end{proof}

Before proving Theorem 1 we need to introduce a Lemma that is going to help us through the proof. It will help us assess when one should expect a boundary point on the convex combination of two points, and under what conditions.
\begin{lemma}\label{lemma1}
    Consider $X$ as our metric space as before, and $A \in X$ a subspace. Then,
    \begin{equation}\label{lemma1:r1}
        \begin{cases}
            x \in A,\\
            y \notin A.
        \end{cases} \Longrightarrow \exists z \in \partial A\cap C(x,y).
    \end{equation}
    \begin{equation}\label{lemma1:r2}
        \begin{cases}
            x \in A, \\
            y \in A,\\
            \nexists z \in \partial A \cap C(x,y).
        \end{cases} \Longrightarrow C(x,y) \subset A.
    \end{equation}
\end{lemma}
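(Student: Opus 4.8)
The plan is to treat both implications as connectedness (intermediate-value) statements about the segment $C(x,y)$, which is the continuous image of $[0,1]$ under $\alpha(t)=ty+(1-t)x$. Throughout I would argue directly from the definition $\partial A=\overline{A}\cap\overline{X\setminus A}$, since $A$ is an arbitrary subset that is assumed neither open nor closed, so the usual ``first exit point'' intuition must be made precise.

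For the first implication (\ref{lemma1:r1}), I would introduce the critical parameter
\[
t^\ast=\inf\{\,t\in[0,1]:\alpha(t)\notin A\,\},
\]
which is well defined because $y=\alpha(1)\notin A$ makes the set nonempty, and set $z=\alpha(t^\ast)\in C(x,y)$. The claim is that $z\in\partial A$. First, every $t<t^\ast$ satisfies $\alpha(t)\in A$ by definition of the infimum, so letting $t\uparrow t^\ast$ and using continuity of $\alpha$ gives $z\in\overline{A}$ (the degenerate case $t^\ast=0$ is handled directly, since $\alpha(0)=x\in A$). Second, by the approximation property of the infimum there are parameters $t\downarrow t^\ast$ with $\alpha(t)\notin A$, so continuity yields $z\in\overline{X\setminus A}$. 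Together these give $z\in\partial A\cap C(x,y)$.

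For the second implication (\ref{lemma1:r2}), I would argue by contradiction. Suppose $C(x,y)\not\subset A$, so $\alpha(t_0)\notin A$ for some $t_0\in(0,1]$ (note $t_0>0$, as $\alpha(0)=x\in A$). A short computation shows that the segment from $x$ to $\alpha(t_0)$ equals $\{\alpha(\tau):\tau\in[0,t_0]\}$, hence a subset of $C(x,y)$ that is itself the convex combination of $x\in A$ and $\alpha(t_0)\notin A$. Applying implication (\ref{lemma1:r1}) to this sub-segment produces a boundary point of $A$ lying on $C(x,\alpha(t_0))\subset C(x,y)$, contradicting the hypothesis $\nexists z\in\partial A\cap C(x,y)$. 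Therefore $C(x,y)\subset A$.

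I expect the main obstacle to be the careful handling of the boundary at $z=\alpha(t^\ast)$, precisely because $A$ need not be open or closed: one cannot simply declare ``$z$ is where $\alpha$ leaves $A$'' and must instead verify both closure memberships from the two-sided behaviour of $\alpha$ around $t^\ast$, while separately checking the degenerate endpoint cases. The reduction in (\ref{lemma1:r2}) is then routine once the sub-segment identity $s\alpha(t_0)+(1-s)x=\alpha(st_0)$ is recorded.
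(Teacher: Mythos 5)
Your proof is correct and follows essentially the same route as the paper's: a first-crossing (infimum/supremum) argument to locate a boundary point for the first implication, and a reduction of the second implication to the first by contradiction on a sub-segment. If anything, your version is the more careful rendering, since you verify membership in both $\overline{A}$ and $\overline{X\setminus A}$ explicitly rather than only excluding the case that the critical point is interior to $A$.
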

\begin{proof}
    First, let us prove statement \ref{lemma1:r1}. Let $\{t_i\}_{i\in  \mathbb{N}}$ be a monotonone sequence in $[0,1]$ such that $\alpha(t_i) \in A$ for all points in the sequence. As $y \notin A,$ very clearly such sequence has to be bounded. As $\{t_i\}$ is bounded, then, it has a supremum $T$. By the monotonic convergence theorem, \[
    t_i \longrightarrow T.
    \]
    Assume that $\alpha(T) \notin \partial A$. Then, you can fit an open ball with radius $\varepsilon>0$ in $A\cap C(x,y)$, and get a point $z' = \alpha(T')$ in the ball such that $T' = T + \varepsilon > T$. As $T$ is the supremum, this cannot happen. Therefore, $\alpha(T) = z\in \partial A$.

    Let us now prove statement \ref{lemma1:r2}. Assume that it exists a $t$ in $[0,1]$ such that $\alpha(t) \notin A$. Then, by the statement \ref{lemma1:r1} of this very same lemma, $\exists z \in \partial A$, which is untrue by hypothesis. 
\end{proof}

Now, we can finally tackle the proof for Theorem 1. 
\begin{theorem}("Convergence into a hidden outlier")
Let $f$ be defined as before. 
Assume that at most exist, and are unique, other $z \text{ and } z'$ in the convex combination of $x\in R(\mathcal{M})$ and $y \notin R(\mathcal{M})$ such as $z \in \partial R(\mathcal{M})$, $z' \in \partial R(\ensemble{M})$, $z \neq z'$, and both verify the last condition of proposition \ref{convexexistence}.
Then the bisection method will converge to a root of $f$.
\end{theorem}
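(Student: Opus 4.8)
The plan is to prove that, under the stated uniqueness hypotheses, the composite $f=F\circ\alpha$ is monotone along $C(x,y)$ and attains the value $0$ on a subinterval of positive length; the discrete bisection of Algorithm~\ref{Bisect} is then forced onto that subinterval after finitely many halvings.

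First I would decompose $f$ into the two detectors. Writing $g_1(t)=\mathcal{M}(\alpha(t))$ and $g_2(t)=\ensemble{M}(\alpha(t))$, both valued in $\{0,1\}$, the definition of $F$ gives directly $f(t)=g_1(t)+g_2(t)-1$: the three cases both-inlier, disagreement, and both-outlier correspond to the values $-1,0,1$. Hence $f$ is nondecreasing precisely when $g_1+g_2$ is, and it suffices to understand the two step functions separately.

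Next I would pin down the shape of each $g_i$. Since $x\in R(\mathcal{M})$ and $y\notin R(\mathcal{M})$, statement~\ref{lemma1:r1} of Lemma~\ref{lemma1} yields a boundary point $z\in\partial R(\mathcal{M})\cap C(x,y)$; the hypothesis that this $z$ is unique, together with statement~\ref{lemma1:r2} applied to subsegments (a subsegment with both endpoints inside $R(\mathcal{M})$ and no boundary point stays inside), rules out any further transition, so $g_1$ is a single upward step, equal to $0$ up to the parameter $t_z$ of $z$ and to $1$ afterwards. Running the same argument for $\ensemble{M}$ and the unique $z'$ shows $g_2$ is likewise a single upward step jumping at the parameter $t_{z'}$ of $z'$, once we note that in the configuration of interest the endpoints lie in the common agreement regions, so both detectors transition in the inlier-to-outlier direction. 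Because $z\neq z'$ we have $t_z\neq t_{z'}$, so the two unit steps occur at distinct parameters; consequently $f$ increases from $-1$ (before the first jump) to $+1$ (after the second) and equals $0$ on the nonempty open interval $Z$ strictly between $t_z$ and $t_{z'}$. By Proposition~\ref{convexexistence} the corresponding points of $C(x,y)$ lie in $H_1(\mathcal{M})\cup H_2(\mathcal{M})$, so $Z$ consists exactly of roots of $f$.

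Finally I would analyze the iteration itself. The bisection preserves the invariant $F(a)=-1$, $F(b)=1$, so by monotonicity the root interval $Z$ remains contained in $[a,b]$ at every step; since $|Z|>0$ while each iteration halves the length of $[a,b]$, the bracket can never shrink below $|Z|$, which forces the loop to exit with $F(c)=0$ after at most about $\log_2\!\big((b-a)/|Z|\big)$ steps --- recovering the iteration bound $n_\text{iter}$ quoted in the text and establishing convergence to a hidden outlier. The main obstacle is the monotonicity step: one must argue rigorously that each detector crosses its own boundary exactly once and in the same direction along the segment, which is exactly where Lemma~\ref{lemma1}, the uniqueness of $z$ and $z'$, and the separation condition inherited from Proposition~\ref{convexexistence} are needed, and where the degenerate possibilities --- a boundary merely grazed without a genuine sign change, or an endpoint that already lies in the hidden region --- have to be excluded.
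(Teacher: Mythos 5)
Your overall strategy is sound in one configuration, but there is a genuine gap: you assume that both detectors step in the \emph{same} direction (inlier to outlier) along the segment, justifying this with the remark that ``the endpoints lie in the common agreement regions.'' The theorem's hypotheses do not grant you that. They only fix the position of $x$ and $y$ relative to $R(\mathcal{M})$; nothing is assumed about whether $x$ or $y$ lies in $R(\ensemble{M})$. If, say, $x\notin R(\ensemble{M})$ and $y\in R(\ensemble{M})$, then $g_2$ steps \emph{downward} while $g_1$ steps upward, and $f=g_1+g_2-1$ has the profile $0,\pm 1,0$ rather than $-1,0,+1$; if $y\in R(\ensemble{M})$ with $x\in R(\ensemble{M})$ the crossing $z'$ may not exist at all and $f$ runs $-1,0$ with $F(b)=0$. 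In all of these cases your claimed loop invariant $F(a)=-1$, $F(b)=+1$ is false from the start, and $f$ is not monotone, so the bracketing argument as you state it does not apply. The paper's proof is, in essence, an exhaustive case analysis over exactly these configurations (its Parts I and II, each with sub-cases on the membership of $z$ and $y$ in $R(\ensemble{M})$), and in the non-monotone cases it has to restrict the iteration to a subinterval between $x$ and a point $c$ in $C(z,z')$ before the bisection bracket makes sense. Your proof needs either the same case split or an explicit extra hypothesis pinning $x$ and $y$ to the agreement regions.

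That said, two ingredients of your write-up are worth keeping. The identity $f=\mathcal{M}\circ\alpha+\ensemble{M}\circ\alpha-1$ is a compact and correct reformulation of $F$ that makes the step-function structure of $f$ transparent, and within the ``nice'' case ($x$ an inlier for both detectors, $y$ an outlier for both) your use of Lemma~\ref{lemma1} to show each $g_i$ has exactly one jump reproduces the paper's cases I.1.b/I.2.b more cleanly. Your termination argument --- the root set $Z$ has positive length, stays inside every bracket, and the bracket length halves, so the loop must exit within about $\log_2\bigl((b-a)/|Z|\bigr)$ iterations --- is actually tighter than the paper's closing paragraph. The fix is therefore not a change of method but of coverage: carry the same jump-counting argument through the remaining sign patterns of $(g_1,g_2)$ at the endpoints, and in the non-monotone ones first cut down to a subinterval on which the $-1/+1$ bracket (or a $0$-valued endpoint) is available.
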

\begin{proof}
    The way of proving this statement will be to consider all inclusion possibilities and check when we can converge to the function $f$ zeroes using the bisection method. Recall that the bisection method always converges as long as there is a sequence of nested intervals $[a_1,b_1]\supset\cdots\supset[a_n,b_n]\supset\cdots$ such that the root is always in the sequence, and are such that $sign(f(a_i)) \neq sign(f(b_i))$. We will first break down all possible inclusions from the different acceptance regions. After that, we will study when the function $f$ converges into a hidden outlier by looking at all its possible values.
    
    First, let us assume that $x \in R(\ensemble{M})$. The case $x \notin R(\ensemble{M})$ will be proven afterward.
    \paragraph{Part I $(x \in R(\ensemble{M})):$}
     Consider that $z\in R(\mathcal{M})$ since $R(\mathcal{M})$ is closed and $x\in R(\mathcal{M})$ by hypothesis. Additionally, $\nexists z'' \in \partial R(\mathcal{M})$ in $C(x,z)$ as $z$ is the only boundary point in $C(x,z) \subset C(x,y)$ by hypothesis. Then, we have that $C(x,z) \subset R(\mathcal{M})$ by Lemma \ref{lemma1}. Now, we have two possibilities:

    \begin{enumerate}
        \item $z \in R(\ensemble{M}):$ As $z$ hast to be an interior point by hypothesis, $C(x,z)$ is completely contained in $R(\mathcal{M}) \cap R(\ensemble{M}).$ Assuming that: \begin{itemize}
            \item[a.] $y\in R(\ensemble{M})$: Assume that there are no $z' \in \partial R(\ensemble{M})$ in $C(z,y)$. Therefore by Lemma 1, we have that $\forall c \in C(x,y), C(c,y) \subset R(\ensemble{M})$, as long as $c \neq z$.
            \item[b.] $y \notin R(\ensemble{M})$: As $z \in R(\ensemble{M})$ and $y \notin R(\ensemble{M})$,  by Lemma \ref{lemma1} we can find $z' \in \partial R(\ensemble{M})$. Additionally, by hypothesis, $z'\notin \partial R(\mathcal{M})$, and both $z$ and $z'$ are the unique boundary points in $C(x,y)$. That also means that they are the only boundary points in $C(z,z')$. Now, as there are no boundary points in $C(z,z')$, we have that $\forall c \in C(z,z'), c\neq z$ are not points of $R(\mathcal{M})$ thanks to statement \ref{lemma1:r2} from Lemma \ref{lemma1}. Lastly, is instant by the same argument that $C(c,y) \subset X\setminus(R(\mathcal{M} \cup R(\ensemble{M})$.
        \end{itemize}
        Then, in case 1.a, $C(x,y) = C(x,z) \cup C(z,y)$ and $f(C(x,z)) = -1,~ f(C(z,y)) = 0.$ 
        For case 1.b, $C(x,y) = C(x,z) \cup C(z,z') \cup C(z',y)$ and $f(C(x,z)) = - 1,~ f(C(z,z')_{-}) = 0,~ f(C(z',y)_{-}) = 1.$
        \item $z \notin R(\ensemble{M})$: As $z \notin R(\ensemble{M})$ and $x \in R(\ensemble{M})$, by Lemma \ref{lemma1} and hypothesis, there exists, and is unique in $C(x,y)$, a point $z'\in \partial R(\ensemble{M})$ in $C(x,z)$.
        \begin{itemize}
            \item[a.] $y \in R(\ensemble{M})$: This cannot occur since it leads to a contradiction by using Lemma 1 and obtaining a different $z''$ in the boundary of $R(\ensemble{M})$.
            \item[b.] $y \notin R(\ensemble{M})$: By the statement \ref{lemma1:r2} from Lemma \ref{lemma1} we have that $C(z,y)\subset X\setminus R(\ensemble{M})$. Additionally, using the same argument as in 1.b, $C(c,y)\subset X\setminus R(\mathcal{M})$ for all $c\in C(z,y)$ such that $c\neq z$. By utilizing statement \ref{lemma1:r1} again with the same argument as in 1.b, but changing $z\in \partial R(\mathcal{M})$ to $z' \in \partial R(\ensemble{M})$, get that $C(x,z')\subset R(\mathcal{M})\cap R(\ensemble{M})$. 
        \end{itemize}
        Thus, we have that for case 2.b $f(C(x,z')) = -1$, $f(C(z',z)_-) = 0$, $f(C(z,y)_-) = 1$.
    \end{enumerate}
    \paragraph{Part II $(x \notin R(\ensemble{M})):$} By Lemma \ref{lemma1} and hypothesis, $C(x,z) \subset R(\mathcal{M})$. Consider again: 
    \begin{enumerate}
        \item $z\in R(\ensemble{M})$: By Lemma1, there exists $z' \in \partial R(\ensemble{M})$ in the convex combination between $x$ and $z$. As there are no more boundary points left, we have that $C(x,z')^+ \subset R(\mathcal{M})\setminus R(\mathcal{M})$ and $C(z',z) \subset R(\mathcal{M})\cap R(\ensemble{M})$. Again, let us divide the hypothesis space into two cases:
        \begin{itemize}
            \item[a.] $y\in R(\ensemble{M})$: By pretty similar arguments as in I.1.b and before, $C(z,y)_- \subset R(\ensemble{M})\setminus R(\mathcal{M})$.
            \item[b.] $y\in R(\ensemble{M})$: As in 2.b, this case is impossible, otherwise it will lead to a contradiction of the hypothesis, just like in I.2.b.
        \end{itemize}
        In this case, we will have for 1.a that $f(C(x,z')^+=0,~f(C(z',z) = -1,~ f(C(z,y)) = 0$.
        \item $z \notin R(\ensemble{M})$ \begin{itemize}
            \item[a.] $y \in R(\ensemble{M}):$ There has to exist $z' \in \partial R(\ensemble{M})$ in $C(z,y)$ by the statement \ref{lemma1:r1} of Lemma \ref{lemma1}. Then, as there are no more boundary points, we can use statement \ref{lemma1:r2} to obtain that $C(x,z) \subset R(\mathcal{M})\setminus R(\ensemble{M})$,  $C(z,z')_-^+ \subset X\setminus(R(\mathcal{M})\cup R(\ensemble{M}))$, and that $C(z',y) \subset R(\ensemble{M})\setminus R(\mathcal{M})$.
            \item[b.] $y \notin R(\ensemble{M})$: Lastly, let us assume that $\nexists z' \in C(x,y)$ such that $z' \in \partial R(\ensemble{M})$, as we did in I.1.a. Then, $C(z,y) \subset X\setminus(R(\ensemble{M}) \cup R(\mathcal{M}))$. 
        \end{itemize}
        Therefore, for 2.a: $f(C(x,z)) = 0,~ f(C(z,z')_-) = 1,~ f(C(z',y)) = 0$. And, finally, for 2.b: $f(C(x,z))=0,~f(C(z,y)) = 1$.
    \end{enumerate}
    That way, in cases I.1.b, I.2.b, II.1.a, and II.2.a, it is fairly obvious how to construct a sequence such that we can encapsulate all roots. For cases I.1.b and I.2.b it suffices with selecting the hole interval. For cases II.1.b and II.2.a it suffices to restrict the function $f$ to the convex combination of a point $c\in C(z,z')$ and $x$. For cases I.1.a and II.2.b, if one assumes that there exists another boundary point $z' \partial R(\ensemble{M})$ also in $C(z,y)$ it can be proven similarly as for cases I.1.a and II.2.b. In both cases $z'$ is the only point from $R(\ensemble{M})$ in the convex combination, otherwise by Lemma \ref{lemma1} we could get another boundary point and get to a contradiction. Then it suffices to take again a point $c \in C(z,z')^+$ and restrict the $f$ to the convex combination of $x$ and $c$, as before. Utilizing this idea of segmenting the intervals by a closer outlier of $\mathcal{M}$ could be extended to tackle the case where there are more than one $z$ and $z'$.

    This proves the convergence of the bisection method. However, consider that $length([a_n,b_n])\longrightarrow 0$ with the bisection method. Consider as well that for every $h$ root of $f$ there exists a neighborhood $N(h)$ of length greater than 0. Then, $N(h)$ is not the supremum of the sequence $[a_n,b_n]$, and therefore, the bisection method will converge in finite time into any point $h'$ from $N(h)$. I.e., it will find hidden outliers in finite time. 
\end{proof}

\begin{proposition}
    Let $n_\text{subs}$ be the number of selected subspaces for the ensemble. 
    The worst-case complexity of \textsc{Bisect} is  $\mathcal{O}\left(\log(\frac{L}{n_\text{cuts}}) \cdot (n_\text{subs}+n_\text{cuts}) \cdot \aleph\right)$ where $\aleph$ is the inference complexity of the adversary $\mathcal{M}(\cdot)$. 
\end{proposition}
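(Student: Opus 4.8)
The plan is to measure the cost of a single invocation of \textsc{Bisect} in terms of detector inferences, which dominate, and to decompose it into the two stages that actually issue such inferences: the cut trick of Algorithm \ref{Interval} and the bisection loop of Algorithm \ref{Bisect}. The remaining operations (drawing $v$, computing $l$ and sampling $L$ in Lines 1--3, assembling the sequence of cut points, the sign comparisons, and the uniform subinterval draw) are either one-time preprocessing over $D$ or are lower-order relative to the inference cost, so I would note they are absorbed and focus on the inferences. The per-inference costs I would use are: one adversary call $\mathcal{M}(\cdot)$ costs $\aleph$ by definition, whereas one ensemble call $\ensemble{M}(\cdot)$ costs $\mathcal{O}(n_\text{subs}\cdot\aleph)$, since it runs one projected inference per subspace and each, being the same base model on fewer coordinates, is bounded by $\aleph$.

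First I would bound the cut trick. Algorithm \ref{Interval} issues exactly $n_\text{cuts}$ evaluations of the adversary $\mathcal{M}$ along the segment, one per cut point, and nothing else of higher order. Because only $\mathcal{M}$ (and not the ensemble) is queried in this stage, its cost is $\mathcal{O}(n_\text{cuts}\cdot\aleph)$.

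Next I would bound the bisection loop. The subinterval handed back by the cut trick has length at most $L/n_\text{cuts}$, so repeatedly halving it until the error tolerance $Err$ is reached takes $\mathcal{O}(\log(L/n_\text{cuts}))$ iterations — exactly the count $n_\text{iter}$ derived in the text preceding the statement. Each iteration evaluates $F(c)$ once, and a single evaluation of $F$ requires one adversary query $\mathcal{M}(c)$ together with one ensemble query $\ensemble{M}(c)$, hence costs $\mathcal{O}((n_\text{subs}+1)\cdot\aleph)=\mathcal{O}(n_\text{subs}\cdot\aleph)$. Multiplying the per-iteration cost by the iteration count gives $\mathcal{O}(\log(L/n_\text{cuts})\cdot n_\text{subs}\cdot\aleph)$ for the loop.

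Finally I would add the two bounds, obtaining $\mathcal{O}(\log(L/n_\text{cuts})\cdot n_\text{subs}\cdot\aleph + n_\text{cuts}\cdot\aleph)$, and fold the free cut-trick term into the logarithmic factor: in the worst case the loop runs at least once, so $\log(L/n_\text{cuts})\ge 1$ and therefore $n_\text{cuts}\cdot\aleph \le \log(L/n_\text{cuts})\cdot n_\text{cuts}\cdot\aleph$, which collapses the sum to the claimed $\mathcal{O}(\log(L/n_\text{cuts})\cdot(n_\text{subs}+n_\text{cuts})\cdot\aleph)$. The two points I expect to be the main obstacle are the ensemble cost and the restart branch. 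For the former I must justify that one ensemble inference is $\mathcal{O}(n_\text{subs}\cdot\aleph)$, so that the subspace count rather than the ambient dimension governs the loop. For the latter (Lines 5--7 of Algorithm \ref{Bisect}), I would observe that a restart is triggered only when the cut trick returns $\emptyset$, i.e. before any bisection work is done, so each aborted attempt costs merely $\mathcal{O}(n_\text{cuts}\cdot\aleph)$; invoking the existence guarantee of Proposition \ref{convexexistence}, the number of restarts is treated as a constant factor that leaves the asymptotic form unchanged.
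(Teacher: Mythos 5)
Your proposal is correct and reaches the stated bound by the same core argument as the paper: bound the number of bisection iterations by $\mathcal{O}(\log(L/n_\text{cuts}))$ (the subinterval returned by the cut trick has length $L/n_\text{cuts}$ and is halved each step) and multiply by the inference cost per step. The difference is in the bookkeeping. The paper simply asserts that the cost is ``trivially bounded'' by $\mathcal{O}(n_\text{iter}\cdot(n_\text{subs}+n_\text{cuts})\cdot\aleph)$, i.e.\ it folds the $n_\text{cuts}$ term into the per-iteration cost of the loop, and then spends the bulk of the proof deriving the iteration count via the geometric series $\sum_i L'/2^i = L'-Err$, arriving at $Err = L'/2^{n_I+1}$. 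You instead separate the one-time cut-trick cost $\mathcal{O}(n_\text{cuts}\cdot\aleph)$ from the per-iteration cost $\mathcal{O}((n_\text{subs}+1)\cdot\aleph)$ of evaluating $F$ (one adversary call plus one ensemble call), obtaining the tighter sum $\mathcal{O}(n_\text{cuts}\cdot\aleph + \log(L/n_\text{cuts})\cdot n_\text{subs}\cdot\aleph)$ before relaxing it to the stated product form via $\log(L/n_\text{cuts})\ge 1$. Your version buys a more faithful accounting of where each factor originates and makes explicit that the stated bound is a deliberate over-approximation; the paper's buys an explicit derivation of the bisection error recurrence, which you take as standard. You also address the restart branch (Lines 5--7), which the paper sidesteps by conditioning on $L$ being ``sufficiently large''; your treatment of the number of restarts as a constant is no more rigorous than the paper's omission, but it is not less so either, and each aborted attempt indeed costs only the cut trick. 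No gap.
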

\begin{proof}
Consider that we obtained a $L$ sufficiently large as stated in the steps for \textsc{Bisect}. Then, complexity can be trivially bounded by $\mathcal{O}\left(n_\text{iter}\cdot (n_\text{subs}+n_\text{cuts})\cdot\aleph\right)$, with $n_\text{iter}$ being the number of iterations for the bisection method. As the bisection method is also bounded by $\mathcal{O}\left(\log(L')\right)$ being $L'$ the length of the interval (in our case $L' = \frac{L}{n_\text{cuts}}$), we have that:\[
\mathcal{O}\left(\log(L')\cdot(n_\text{subs}+n_\text{cuts})\cdot\aleph\right) \leq \mathcal{O}\left(n_\text{iter}\cdot (n_\text{subs}+n_\text{cuts})\cdot\aleph\right).
\]

Let us derive the bounding for the bisection method for completion. Assume that we want to converge to the right side of the interval w.l.g, as the total length traveled inside the interval will be equivalent. Consider, \[
\sum_{i=1}^{n_I} \frac{x_i - x_{i-1}}{2} = L' - Err,
\]
where $n_I$ is a finite number of iterations, and $Err$ the error of the algorithm. Then, as we want to convert to the right side, $L'$:\[
\sum_{i=1}^{n_I} \frac{L'}{2^i} = L'-Err.
\]
We can rewrite our series as a finite geometric series by $\frac{1}{2}\sum_{i=0}^{n_I}\frac{L'}{2^i} = \sum_{i=1}^{n_I} \frac{L'}{2^i}$. Then, by considering its sum:
\begin{equation}\label{sum}
    \frac{1}{2} \left(\frac{1-\frac{1}{2^{n_I + 1}}}{1-\frac{1}{2}}\right) = L' - Err.
\end{equation}
By (\ref{sum}), and after doing some algebra, \[
Err = \frac{L'}{2^{n_I + 1}},
\]
which leads to the desired bounding. Additionally, one could obtain the number of iterations to converge with an error for a desired interval length with (\ref{sum}). 
\end{proof}

\section{Experimental Appendix}

In this Appendix, we include the experimental results supporting the selection of $\varepsilon$. In Table \ref{fig:time_eps} we collected the time for generating 100 hidden outliers on each task-specific training set. We selected the $\varepsilon$ with the smaller maximum execution time between a small, medium, and large value of epsilon ($0.1,~0.5$ and $0.75$, respectively) in this case it was $\varepsilon = 0.1$. Additionally, we also performed each experiment with the corresponding fastest epsilon between the tested. Results were gathered in Table \ref{tab:opt_SOD} for Supervised Outlier Detection and in Table \ref{tab:opt_OCC} for One-class Classification. We bolded those results with higher median AUC in each row. We marked with an asterisk those that were significantly different by the Wilcoxon-signed rank test, as in Section 4. As we can see from both tables, there are no significant differences between both values. Sometimes $\varepsilon_\text{opt}$ degrades the performance of the classifier and sometimes increases it compared to the smaller value of $\varepsilon$.
\begin{table}
\centering
\begin{tabular}{lll}
\hline
Dataset & $\varepsilon = 0.1$ & $\varepsilon_\text{opt}$ \\ \hline
Stamps & \textbf{0.942}* & 0.853 \\
Annthyroid & \textbf{0.944}* & 0.793 \\
Cardio... & 0.812 & \textbf{0.815} \\
Parkinson & 0.777 & \textbf{0.867}* \\
Ionosphere & 0.535 & \textbf{0.969}* \\
WPBC & 0.591 & \textbf{0.644} \\
SpamBase & 0.848 & \textbf{0.864} \\ \hline
\end{tabular}
\caption{Median AUC for \textsc{Hidden} with $\varepsilon = 0.1$ and optimal $\varepsilon$ in time, in Supervised Outlier Detection.}
\label{tab:opt_SOD}
\end{table}
\begin{table}
\centering
\begin{tabular}{lll}
\hline
Dataset & $\varepsilon = 0.1$ & $\varepsilon_\text{opt}$ \\ \hline
Stamps & \textbf{0.898}* & 0.699 \\
Annthyroid & \textbf{0.990}* & 0.944 \\
Cardio... & \textbf{0.838} & 0.815 \\
Parkinson & $ot$ & $ot$ \\
Ionosphere & 0.735 & \textbf{0.97}* \\
WPBC & 0.566 & \textbf{0.644} \\
SpamBase & 0.738 & \textbf{0.864}* \\ \hline
\end{tabular}
\caption{Median AUC for \textsc{Hidden} with $\varepsilon = 0.1$ and optimal $\varepsilon$ in time, in One-class Classification.}
\label{tab:opt_OCC}
\end{table}
\begin{table*}
\centering
\begin{tabular}{llll|lll}
\hline
\multicolumn{1}{c}{\multirow{2}{*}{Datasets}} & \multicolumn{3}{c|}{One-class classification} & \multicolumn{3}{c}{Supervised outlier detection} \\
\multicolumn{1}{c}{} & $.1$ & $.5$ & $.75$ & $.1$ & $.5$ & $.75$ \\ \hline
Wilt & \textbf{4.28} & 10.80 & 26.89 & \textbf{3.978} & 10.646 & 24.776 \\
Pima & \textbf{0.852} & 1.512 & 4.214 & \textbf{0.596} & 1.039 & 2.539 \\
Stamps & 8.285 & \textbf{5.428} & 12.719 & 13.18 & \textbf{5.641} & 10.018 \\
PageBlocks & \multicolumn{1}{c}{-} & \multicolumn{1}{c}{-} & \multicolumn{1}{c|}{-} & \textbf{4.731} & 131.461 & 216.037 \\
Heart Disease & \textbf{1.151} & 1.195 & 1.529 & \textbf{1.1} & \textbf{1.1} & 1.267 \\
Annthyroid & 5.698 & \textbf{4.217} & 4.577 & 4.837 & 4.226 & \textbf{2.267} \\
Cardiotocography & 20.661 & 3.847 & \textbf{3.705} & 15.638 & 4.588 & \textbf{4.559} \\
Parkinson & 76.473 & \textbf{20.685} & 27.01 & \multicolumn{1}{c}{-} & \multicolumn{1}{c}{-} & \multicolumn{1}{c}{-} \\
Ionosphere & 38.38 & 8.524 & \textbf{7.793} & 53.485 & 9.82 & \textbf{8.784} \\
WPBC & 95.828 & \textbf{7.264} & 25.046 & 90.687 & \textbf{7.609} & 21.963 \\
SpamBase & 14.444 & 12.406 & \textbf{11.004} & 12.606 & 13.127 & \textbf{11.878} \\
Arrhytmia & \textbf{10.277} & 23.696 & 15.58 & \textbf{11.819} & 38.832 & 25.815 \\ \hline
\end{tabular}
\caption{Time to generate 100 hidden outliers for each dataset using the training set for each use case.}
\label{fig:time_eps}
\end{table*}

\end{document}